\documentclass{article}

\PassOptionsToPackage{numbers,compress}{natbib}
\usepackage[preprint]{neurips_2026}
\usepackage{float}

\usepackage[utf8]{inputenc} % allow utf-8 input
\usepackage[T1]{fontenc}    % use 8-bit T1 fonts
\usepackage{hyperref}       % hyperlinks
\usepackage{url}            % simple URL typesetting
\usepackage{booktabs}       % professional-quality tables
\usepackage{amsfonts}       % blackboard math symbols
\usepackage{nicefrac}       % compact symbols for 1/2, etc.
\usepackage{microtype}      % microtypography
\usepackage[table]{xcolor}  % colors
\usepackage{wrapfig}
\usepackage{amsthm}

\newtheorem{proposition}{Proposition}

\usepackage{subcaption}
\usepackage{graphicx}
\usepackage{amsmath}
\usepackage[ruled,vlined]{algorithm2e}
\usepackage{multirow}
\usepackage{enumitem}
\usepackage{lipsum}
\usepackage{array}
\usepackage{colortbl}

\definecolor{cardinal}{HTML}{8C1515}

\definecolor{tblHeader}{HTML}{F1F5F9}   % Very light gray-blue (header row)
\definecolor{tblGroupA}{HTML}{FEF8F5}   % Very light warm peach (No-RM)
\definecolor{tblGroupB}{HTML}{F3F9F5}   % Very light cool mint (Reward model)
\definecolor{tblGroupC}{HTML}{F3F7FC}   % Very light airy blue (Online RL)
\definecolor{tblRule}{HTML}{C8D0DA}     % soft rule color
\renewcommand{\arraystretch}{0.9}

\title{General Preference Reinforcement Learning}

\author{
  Muhammad Umer\thanks{Equal contribution.} \\
  Stanford University \\
  \texttt{mumer@stanford.edu} \\
  \And
  Muhammad Ahmed Mohsin\footnotemark[1] \\
  Stanford University \\
  \texttt{muahmed@stanford.edu}
  \And
  Ahsan Bilal \\
  The University of Oklahoma \\
  \texttt{ahsan.bilal-1@ou.edu}
  \And
  Arslan Chaudhry \\
  \texttt{} \\
  \texttt{}
  \And
  Andreas Haupt \\
  Stanford University \\
  \texttt{h4upt@stanford.edu}
  \And
  Sanmi Koyejo \\
  Stanford University \\
  \texttt{sanmi@stanford.edu}
  \And
  Emily Fox \\
  Stanford University \\
  \texttt{ebfox@stanford.edu}
  \And
  John M. Cioffi \\
  Stanford University \\
  \texttt{cioffi@stanford.edu}
}

\begin{document}
\maketitle
\begin{abstract}
Post-training has split large language model (LLM) alignment into two largely disconnected tracks. Online reinforcement learning (RL) with verifiable rewards drives emergent reasoning on math and code but depends on a programmatic verifier that cannot reach open-ended tasks, while preference optimization handles open-ended generation yet forgoes the continuous exploration that powers online RL. Closing this gap requires a verifier for open-ended quality, but a scalar reward model is the wrong shape for the job. Quality is multi-dimensional, and any scalar score is an incomplete proxy that lets online RL collapse onto whichever axis the score is most sensitive to. We turn instead to the General Preference Model (GPM), which embeds responses into $k$ skew-symmetric subspaces and represents preference as a structured, intransitivity-aware comparison. Building on this, we propose General Preference Reinforcement Learning (GPRL), which carries the $k$-way structure through to the policy update. GPRL computes per-dimension group-relative advantages, normalizes each on its own scale so no axis can dominate, and aggregates them with context-dependent eigenvalues. The same structure powers a closed-loop drift monitor that detects single-axis exploitation and corrects it on the fly by reweighting dimensions and tightening the trust region. Starting from $\texttt{Llama-3-8B-Instruct}$, GPRL reaches a length-controlled win rate of $56.51\%$ on AlpacaEval~2.0 while also outperforming SimPO and SPPO on Arena-Hard, MT-Bench, and WildBench by resisting reward hacking across extended training runs.
% Code for this work is available at \url{`TODO/released_upon_publication'}.
\end{abstract}

\section{Introduction}
Post-training now decides what a large language model (LLM) can actually do, since it turns a pretrained base model into a system that follows instructions, reasons through hard problems, and aligns with human values~\citep{kumar2025llm}. The dominant recipe, Reinforcement Learning from Human Feedback (RLHF)~\citep{ouyang2022training}, fits a scalar reward model (RM) to human preferences and optimizes a policy against it with Proximal Policy Optimization (PPO)~\citep{schulman2017proximal}. Although this pipeline has unlocked capabilities such as instruction following, multi-turn dialogue, and tool use, it remains complex, trains unstably, and gives way to reward hacking once pushed at scale~\citep{shao2024deepseekmath, gao2023scaling}.

\begin{wrapfigure}[21]{r}{0.48\columnwidth}
    \centering
    \includegraphics[width=0.48\columnwidth]{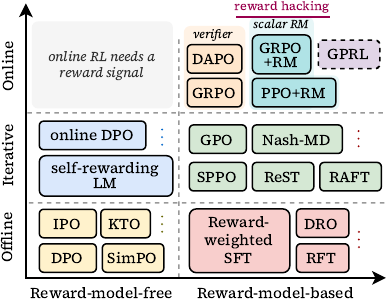}
    \caption{\textbf{Landscape of LLM post-training methods}, organized by supervision source and training regime. Online RL with a scalar RM reaches open-ended tasks but suffers reward hacking; GPRL fills the gap with a structured, multi-dimensional reward.}
    \label{fig:landscape}
\end{wrapfigure}

In response, the field has split into two largely disconnected tracks. The first avoids explicit reward modeling and optimizes the policy directly on preference data. Offline methods such as Direct Preference Optimization (DPO)~\citep{rafailov2023direct} dominate this track alongside game-theoretic variants such as Self-Play Preference Optimization (SPPO)~\citep{wu2024self}, Simple Preference Optimization (SimPO)~\citep{meng2024simpo}, and Nash-MD~\citep{munos2024nash}. These methods align well on open-ended tasks, yet they train on static or iteratively refreshed preference pairs and forgo the continuous online exploration that defines reinforcement learning (RL)~\citep{wu2024alphadpo,gou2024mixed}. The second track keeps an explicit reward signal and operates online, sampling fresh responses from the current policy at every step. Group Relative Policy Optimization (GRPO)~\citep{shao2024deepseekmath} has emerged as the standard algorithm here. It provides a critic-free variant of PPO that estimates advantages from a group of sampled responses and admits any scoring function as the reward. When the reward is a programmatic verifier such as a math checker or a unit test, the resulting reinforcement learning with verifiable rewards (RLVR) framework powers modern LLMs such as DeepSeek-R1~\citep{guo2025deepseek} and elicits emergent behaviors, including self-reflection and dynamic strategy adaptation, that supervised fine-tuning (SFT) alone does not produce.

A natural way to combine the two tracks is to run GRPO with a learned scalar RM in place of the verifier, and Figure~\ref{fig:landscape} shows where this combination sits. While the idea looks attractive on paper, since open-ended tasks admit no programmatic verifier and a learned RM seems to fill the gap, in practice it breaks down under extended online RL because of reward hacking. The deeper issue is that any scalar score is an \emph{incomplete proxy} for what humans want from a response, in the sense formalized by Zhuang et al.~\citep{zhuang2020consequences}. Quality has many latent dimensions including helpfulness, factual accuracy, safety, and style, and folding them into one number always discards information about how they trade off~\citep{zhong2024panacea}. Optimizing the proxy then amplifies whichever dimension the RM is most sensitive to and silently degrades the rest~\citep{gao2023scaling, guo2025deepseek}.

We trace the root cause to the lack of representational structure in scalar rewards, and find a principled alternative in the General Preference Model (GPM)~\citep{zhang2024beyond}, which embeds each response into $\mathbb{R}^{2k}$ and computes preferences with a block-diagonal skew-symmetric operator on $k$ independent two-dimensional subspaces, each capturing a different facet of quality.\footnote{The best $k$ is a property of the supervision corpus, not of GPM. With $k=3$ on \texttt{Skywork-Reward}~\citep{liu2024skywork}, the subspaces specialize on roughly distinguishable axes such as helpfulness against verbosity, factual accuracy against fluency, and safety against directness; we use $k=3$ throughout because it is where this corpus saturates.} This delivers a multi-dimensional preference signal, represents intransitive preferences ($A \succ B \succ C \succ A$) that scalar models cannot express, and preserves the linear query complexity of the Bradley-Terry (BT) model~\citep{bradley1952rank}.

Building on this, we propose General Preference Reinforcement Learning (GPRL), which treats GPM as a structured, multi-dimensional reward source for GRPO-style online RL on open-ended tasks. Rather than collapsing each response to a scalar advantage, GPRL computes \emph{per-dimension group-relative advantages} across GPM's $k$ subspaces, normalizes each on its own scale, and aggregates them with context-dependent eigenvalues. After rescaling, every subspace contributes on the same unit-variance scale, so no axis can inflate its share by simply growing in magnitude, and any policy that improves on one axis at the expense of others sees its aggregate advantage held in check by the rest. This vector-derived advantage replaces GRPO's scalar advantage in an otherwise identical objective. The same structure also lets us \emph{correct} reward hacking online through a \emph{drift monitor} that tracks the variance profile across preference dimensions during training, closing the loop with a controller that reweights the dimensions and tightens the KL trust region whenever drift crosses a threshold. Should the policy concentrate probability mass on, say, the verbosity axis, the controller downweights verbosity in the next aggregate advantage, and the policy's incentive shifts back toward neglected axes such as factuality and safety.

\textbf{Contributions.}
\textbf{(i)} We propose GPRL, which extends GRPO-style online RL to open-ended tasks through a multi-dimensional, intransitivity-aware preference embedding and recovers GRPO at $k{=}1$. Unlike multi-objective RLHF, which returns a Pareto family from independent scalar rewards~\citep{xiong2025projection,he2025pareto}, GPRL trains a single online policy whose axis structure comes directly from the GPM. \textbf{(ii)} We introduce per-dimension group-relative advantage estimation and identify a sufficient condition under which the aggregate advantage rejects single-axis exploitation that any scalar reward must prefer, formally separating GPRL from scalar GRPO. \textbf{(iii)} We design a closed-loop drift monitor that flags reward hacking from the shape of the advantage variance profile and corrects it online by reweighting dimensions and tightening the KL trust region, all without any external evaluation signal. \textbf{(iv)} Starting from \texttt{Llama-3-8B-Instruct}, GPRL reaches a $56.51\%$ length-controlled win rate on AlpacaEval~2.0, beating GRPO with a BT reward model by $14.59$ points and also outperforming DPO, SimPO, SPPO, GPO, and GRPO+BT on Arena-Hard, MT-Bench, and WildBench.

\section{Preliminaries}

\subsection{Group Relative Policy Optimization}

GRPO~\citep{shao2024deepseekmath} is a critic-free variant of PPO~\citep{schulman2017proximal} that estimates advantages from a group of sampled responses rather than a learned value function. Given a prompt $x$, GRPO samples $G$ responses $\{y_1, \ldots, y_G\}$ from $\pi_{\theta_{\text{old}}}(\cdot \mid x)$, scores each with a reward $R$, and normalizes within the group to form advantages $\hat{A}_i = \big(R(x, y_i) - \operatorname{mean}_{j} R(x, y_j)\big) / \big(\operatorname{std}_{j} R(x, y_j) + \epsilon\big)$. With importance ratio $r_i = \pi_\theta(y_i \mid x) / \pi_{\theta_{\text{old}}}(y_i \mid x)$ and its clipped counterpart $\bar{r}_i = \operatorname{clip}(r_i, 1{-}\epsilon, 1{+}\epsilon)$, GRPO minimizes
\begin{equation}
    \mathcal{L}_{\text{GRPO}}(\theta) = -\mathbb{E}_{x, \{y_i\}}  \left[ \frac{1}{G} \sum_{i=1}^{G} \min \big( r_i \hat{A}_i, \bar{r}_i \hat{A}_i \big) - \beta  \mathrm{KL} \big(\pi_\theta  \|  \pi_{\text{ref}}\big) \right] ,
    \label{eq:grpo}
\end{equation}
where $\beta$ controls KL regularization toward a reference policy $\pi_{\text{ref}}$. In RLVR~\citep{guo2025deepseek,zhang2025extending}, $R$ is a binary verifier $R(x, y) \in \{0, 1\}$ that checks correctness, which works well for math and code but does not extend to open-ended tasks; swapping in a learned scalar RM reintroduces reward hacking~\citep{gao2023scaling,fu2025reward}.

\subsection{General Preference Model and General Preference Optimization}
\label{sec:gpm}

The BT model~\citep{bradley1952rank} sets $\mathbb{P}(y_i \succ y_j \mid x) = \sigma(R(x, y_i) - R(x, y_j))$ with $\sigma$ the logistic function and $R(\cdot) \in \mathbb{R}$ a scalar reward. This formulation imposes a total ordering and cannot represent intransitive preferences such as $A \succ B \succ C \succ A$ that appear in human judgments and rule out any scalar utility~\citep{fishburn1982nontransitive,fishburn1991nontransitive}. The classical fix models preferences with a \emph{skew-symmetric bilinear} form on a convex set of options, generalizing von Neumann-Morgenstern utility and admitting intransitive cycles by construction~\citep{fishburn1981axiomatic, nakamura1998skew}. GPM~\citep{zhang2024beyond} brings this idea into modern alignment by instantiating the bilinear form via learned response embeddings.

\textbf{Preference embeddings.} GPM maps each response $y$ to an embedding $\mathbf{v}_{y \mid x} \in \mathbb{R}^{2k}$ with $\|\mathbf{v}_{y \mid x}\|_2 = 1$, scoring pairs through a fixed block-diagonal skew-symmetric operator $\mathbf{R}^{\succ} \in \mathbb{R}^{2k \times 2k}$ consisting of $k$ blocks $\mathbf{R}_l = \big[\begin{smallmatrix} 0 & -1 \\ 1 & \phantom{-}0 \end{smallmatrix}\big]$. Writing the embedding components as $(v_i^{(1)}, \ldots, v_i^{(2k)})$, the per-subspace score is
\begin{equation}
    s_l(y_i, y_j \mid x) = v_i^{(2l)} v_j^{(2l-1)} - v_i^{(2l-1)} v_j^{(2l)},
    \label{eq:subspace_score}
\end{equation}
and the overall score aggregates across subspaces with context-dependent eigenvalues\footnote{Real skew-symmetric matrices have purely imaginary eigenvalues in conjugate pairs $\pm i \lambda_l$; the $\lambda_l(x) \geq 0$ in Eq.~\eqref{eq:gpm_agg} are their moduli. They are produced for each prompt by a learned ``eigenvalue scale gate'' on the language model's prompt encoding~\citep{zhang2024beyond}.} $\lambda_l(x) \geq 0$,
\begin{equation}
    s(y_i \succ y_j \mid x) = \sum_{l=1}^{k} \lambda_l(x)  s_l(y_i, y_j \mid x), \quad \mathbb{P}(y_i \succ y_j \mid x) = \sigma\big(s(y_i \succ y_j \mid x)\big).
    \label{eq:gpm_agg}
\end{equation}
Since Eq.~\eqref{eq:subspace_score} is antisymmetric in $(i, j)$, the aggregate score is too, and the resulting preference probabilities define a constant-sum two-player game~\citep{wu2024self,munos2024nash}.

\textbf{General Preference Optimization.} Zhang et al.~\citep{zhang2024beyond} pair GPM with an iterative SPPO-style scheme called General Preference Optimization (GPO). Given an opponent policy $\mu$, the algorithm samples $K$ responses per prompt, estimates $\hat{s}(y_i \succ \mu \mid x) = \tfrac{1}{K} \sum_{k=1}^{K} s(y_i \succ y_k \mid x)$, and regresses $\log \pi_\theta / \pi_{\theta_t}$ onto these scores at each iteration $t$. The procedure remains offline and iterative, since one fixes $\pi_{\theta_t}$, collects a batch, trains to convergence, and only then refreshes the rollout policy, mirroring the regime that limits SPPO and DPO-style methods~\citep{song2024importance,wang2025indirect}. GPRL keeps GPM as the reward source but replaces GPO's iterative regression with an online policy-gradient update that samples freshly from $\pi_\theta$ at every step, computes per-dimension group-relative advantages instead of collapsing to the scalar $\hat{s}$, and adds a mechanism to control drift.

\section{General Preference Reinforcement Learning}
\label{sec:method}

GPRL replaces GRPO's scalar reward with GPM's multi-dimensional preference signal and carries the $k$-way structure through to the policy update. Averaging the $k$ subspace scores into a single number would discard the very structure that makes GPM worth using, so GPRL keeps each subspace separate during advantage estimation, normalizes each on its own scale, and only then aggregates.

\begin{figure}
    \centering
    \includegraphics[width=0.99\textwidth]{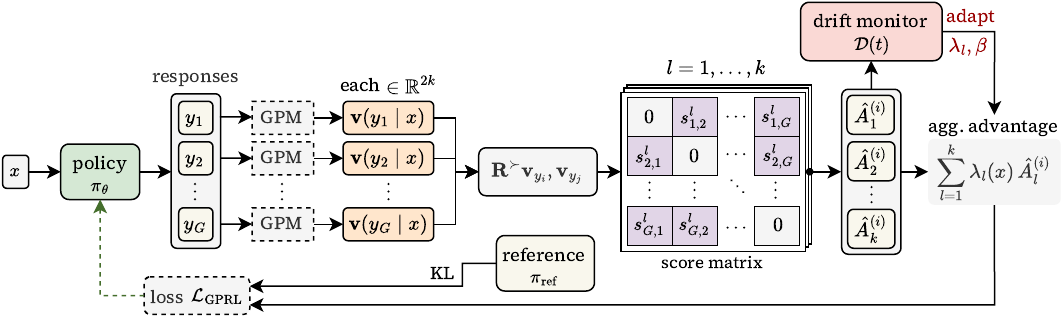}
    \caption{\textbf{Overview of GPRL.} The policy $\pi_\theta$ samples $G$ responses per prompt, GPM embeds them, and $\mathbf{R}^{\succ}$ produces $k$ pairwise score matrices that yield per-dimension advantages. The aggregate drives the GRPO-style clipped surrogate, while a drift monitor $\mathcal{D}(t)$ adapts the dimensional weights and $\beta$ to suppress reward hacking.}
    \label{fig:method}
\end{figure}

\subsection{Per-dimension advantage estimation and GPRL objective}
\label{sec:advantage}

Given a prompt $x$, GPRL samples $G$ responses $\{y_1, \ldots, y_G\}$ from $\pi_{\theta_{\text{old}}}$ and passes them through a frozen GPM, which returns embeddings $\mathbf{v}_{y_i \mid x} \in \mathbb{R}^{2k}$ and eigenvalues $\lambda_l(x)$. We then build the advantage in three steps. First, we form the $k$ pairwise score matrices using $s_l(y_i, y_j \mid x)$ from Eq.~\eqref{eq:subspace_score}, as shown in Figure~\ref{fig:method}. Averaging each row against the rest of the group yields a per-dimension population score $\hat{s}_l^{(i)}(x) = \tfrac{1}{G-1}\sum_{j \neq i} s_l(y_i, y_j \mid x)$ that measures how much $y_i$ beats the rest of the group along dimension $l$. Second, since different subspaces operate at different scales, we normalize within each dimension rather than across all of them,
\begin{equation}
    \hat{A}_l^{(i)}(x) = \frac{\hat{s}_l^{(i)}(x) - \mu_l(x)}{\sigma_l(x) + \epsilon},
    \quad
    \hat{A}_l^{(i)}(x) = 0 \text{ if } \sigma_l(x) = 0,
    \label{eq:gprl_per_dim}
\end{equation}
where $\mu_l(x)$ and $\sigma_l(x)$ are the mean and standard deviation of $\{\hat{s}_l^{(i)}(x)\}_{i=1}^G$. A single global normalization would let whichever subspace carries the largest magnitudes drown out the rest, whereas per-dimension rescaling places every signal on a common unit-variance footing. Finally, we combine the rescaled advantages with GPM's eigenvalues to form the aggregate advantage,
\begin{equation}
    \hat{A}^{(i)}(x) = \sum_{l=1}^{k} \lambda_l(x)  \hat{A}_l^{(i)}(x),
    \label{eq:gprl_advantage}
\end{equation}
and plug this into a GRPO-style clipped surrogate to obtain the GPRL objective,
\begin{equation}
    \mathcal{L}_{\text{GPRL}}(\theta) = -\mathbb{E}_{x, \{y_i\}} \left[ \frac{1}{G}\sum_{i=1}^{G} \min \big( r_i \hat{A}^{(i)}, \bar{r}_i \hat{A}^{(i)} \big) - \beta  \mathrm{KL} \big(\pi_\theta  \|  \pi_{\text{ref}}\big) \right] ,
    \label{eq:gprl_loss}
\end{equation}
where $r_i$ and $\bar{r}_i$ match those of Eq.~\eqref{eq:grpo}. Since Eq.~\eqref{eq:gprl_loss} is structured identically to the GRPO objective and only the computation of $\hat{A}^{(i)}$ differs, GPRL drops into existing large-scale RL infrastructure such as vLLM rollouts~\citep{kwon2023efficient} and distributed training seamlessly, with the only added cost being a frozen GPM forward pass per group that is comparable to any learned RM call.

\begin{proposition}[Zero-mean advantages]
\label{prop:zero_mean}
For every prompt $x$ and every dimension $l$, $\sum_{i=1}^{G} \hat{A}_l^{(i)}(x) = 0$, hence $\sum_{i=1}^{G} \hat{A}^{(i)}(x) = 0$ for any weights $\lambda_l(x) \in \mathbb{R}$.
\end{proposition}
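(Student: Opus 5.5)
The plan is a direct computation from the definition in Eq.~\eqref{eq:gprl_per_dim}, split into two cases according to whether $\sigma_l(x)$ vanishes. Fix a prompt $x$ and a dimension $l$. By definition, $\mu_l(x) = \tfrac{1}{G}\sum_{i=1}^G \hat{s}_l^{(i)}(x)$ is the empirical mean of the population scores, and $\sigma_l(x) \geq 0$ is their standard deviation. Both are constants with respect to the summation index $i$.

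\emph{Case $\sigma_l(x) = 0$.} The convention in Eq.~\eqref{eq:gprl_per_dim} sets every $\hat{A}_l^{(i)}(x) = 0$, so the sum vanishes trivially.

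\emph{Case $\sigma_l(x) > 0$.} The denominator $\sigma_l(x) + \epsilon$ is positive for $\epsilon \geq 0$, so it is a nonzero constant that can be pulled out of the sum:
\begin{equation*}
\sum_{i=1}^G \hat{A}_l^{(i)}(x) = \frac{1}{\sigma_l(x)+\epsilon}\Big(\sum_{i=1}^G \hat{s}_l^{(i)}(x) - G\,\mu_l(x)\Big) = 0,
\end{equation*}
where the last equality is the definition of $\mu_l(x)$. This holds regardless of which convention (population or sample) is used for the standard deviation, because only the centering by the mean matters.

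For the aggregate, I would use Eq.~\eqref{eq:gprl_advantage} and exchange the finite sums:
\begin{equation*}
\sum_{i=1}^G \hat{A}^{(i)}(x) = \sum_{l=1}^k \lambda_l(x) \sum_{i=1}^G \hat{A}_l^{(i)}(x) = 0.
\end{equation*}
This step uses only linearity, so it holds for arbitrary real weights; nonnegativity of $\lambda_l(x)$ is not needed.

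There is no real obstacle. The only point needing care is the degenerate case $\sigma_l(x)=0$. If one took $\epsilon > 0$ without the explicit zero convention, that case would also be covered automatically, since then every numerator $\hat{s}_l^{(i)}(x) - \mu_l(x)$ equals zero. I would remark on this so the result is robust to either implementation choice.
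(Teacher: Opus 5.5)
Your proposal is correct and follows essentially the same route as the paper's proof: the same two-case split on $\sigma_l(x)$, the constant denominator pulled out of the sum and the centered sum killed by the definition of $\mu_l(x)$, then linearity with arbitrary real $\lambda_l(x)$ for the aggregate. Your added remarks (that population versus sample standard deviation is irrelevant, and that with $\epsilon>0$ the degenerate case is covered even without the zero convention because all numerators vanish) are valid refinements and do not change the argument.
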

Proposition~\ref{prop:zero_mean} implies that aggregating $k$ subspace advantages does not break the variance-reduction property that makes group-relative methods well-behaved~\citep{shao2024deepseekmath}. The aggregate retains a zero baseline within every group, the policy gradient stays unbiased, and we are free to mix dimensions through any weights $\lambda_l(x)$ without rederiving anything.\footnote{Fixing $k = 1$ and $\lambda_1(x) \equiv 1$ and defining $R_{\mathrm{GPM}}(x, y_i) := \hat{s}_1^{(i)}(x)$ collapses Eq.~\eqref{eq:gprl_advantage} to the GRPO advantage under $R_{\mathrm{GPM}}$ and Eq.~\eqref{eq:gprl_loss} to Eq.~\eqref{eq:grpo}, so GPRL strictly generalizes GRPO under a preference-model-based scalar reward, and any improvement we observe with $k > 1$ is attributable to the multi-dimensional structure rather than a different optimizer.} Proofs for this and what follows appear in Appendix~\ref{app:proofs}.

\begin{figure}[t]
    \centering
    \begin{subfigure}[t]{0.3\textwidth}
        \centering
        \includegraphics[width=\linewidth]{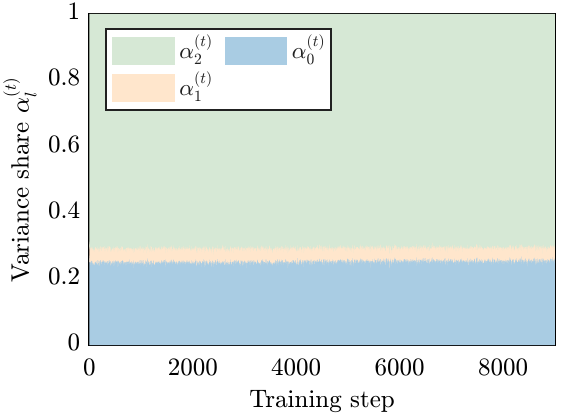}
        \caption{Healthy run}
        \label{fig:drift:healthy}
    \end{subfigure}
    \hfill
    \begin{subfigure}[t]{0.3\textwidth}
        \centering
        \includegraphics[width=\linewidth]{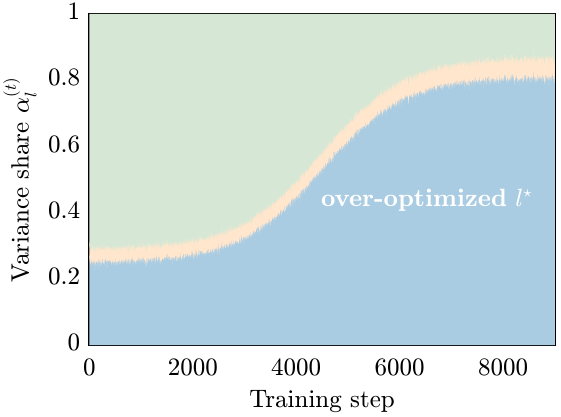}
        \caption{Hacked run}
        \label{fig:drift:hacked}
    \end{subfigure}
    \hfill
    \begin{subfigure}[t]{0.33\textwidth}
        \centering
        \includegraphics[width=\linewidth]{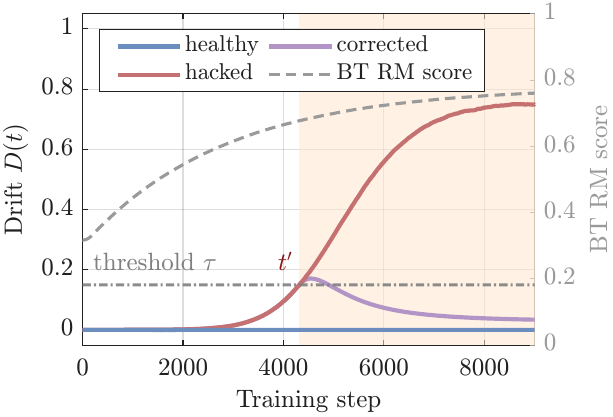}
        \caption{Drift trajectory}
        \label{fig:drift:trajectory}
    \end{subfigure}
    \caption{\textbf{Dimensional drift distinguishes healthy training from reward hacking.} (\subref{fig:drift:healthy}) The variance profile $\alpha^{(t)}$ holds its initial shape on a healthy GPRL run. (\subref{fig:drift:hacked}) Under hacking, it collapses onto a single dimension $l^\star$. (\subref{fig:drift:trajectory}) $\mathcal{D}(t)$ stays near zero on the healthy run and crosses $\tau$ at $t'$ on the hacked one, allowing the \emph{corrected} trajectory to engage the controller at $t'$ and pull back as the profile rebalances, while a BT reward model climbs monotonically and offers no comparable signal.}
    \label{fig:drift}
\end{figure}

\subsection{Drift control}
\label{sec:drift}

The aggregate advantage of Eq.~\eqref{eq:gprl_advantage} acts as a lever the policy gradient pulls on, so understanding why a multi-dimensional reward resists reward hacking requires asking what it takes for the lever to flip the wrong way. Suppose a candidate policy $\pi^{\dagger}$ improves a single axis $l^{\ast}$ relative to a reference policy $\pi^{\ast}$ while regressing on the rest. The gradient direction depends on which policy carries the larger aggregate advantage, and Proposition~\ref{prop:hacking} pins down when GPRL prefers the balanced policy.

\begin{proposition}[Multi-dimensional structure rejects single-axis hacking]
\label{prop:hacking}
Fix a prompt $x$ and a group of responses, and let $\hat{A}_l^{\pi}(x)$ denote the dimension-$l$ advantage of a response drawn from $\pi$, as in Eq.~\eqref{eq:gprl_per_dim}. Let $\pi^{\ast}$ and $\pi^{\dagger}$ satisfy, for some axis $l^{\ast}$, $\hat{A}_{l^{\ast}}^{\pi^{\dagger}}(x) > \hat{A}_{l^{\ast}}^{\pi^{\ast}}(x)$ and $\hat{A}_{l}^{\pi^{\dagger}}(x) < \hat{A}_{l}^{\pi^{\ast}}(x)$ for all $l \neq l^{\ast}$. If
\begin{equation}
    \sum_{l \neq l^{\ast}} \lambda_l(x)  \big(\hat{A}_l^{\pi^{\ast}}(x) - \hat{A}_l^{\pi^{\dagger}}(x)\big) > \lambda_{l^{\ast}}(x)  \big(\hat{A}_{l^{\ast}}^{\pi^{\dagger}}(x) - \hat{A}_{l^{\ast}}^{\pi^{\ast}}(x)\big),
    \label{eq:hack_condition}
\end{equation}
then $\hat{A}^{\pi^{\dagger}}(x) < \hat{A}^{\pi^{\ast}}(x)$.
\end{proposition}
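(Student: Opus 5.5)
The argument is a direct algebraic comparison of the two aggregate advantages from Eq.~\eqref{eq:gprl_advantage}, evaluated at the same prompt $x$ and therefore sharing the same eigenvalues $\lambda_l(x)$. The plan is to write down the difference $\hat{A}^{\pi^{\ast}}(x) - \hat{A}^{\pi^{\dagger}}(x)$, split the sum over dimensions into the hacked axis $l^{\ast}$ and the remaining axes, and show that hypothesis \eqref{eq:hack_condition} makes this difference strictly positive. Here $\hat{A}^{\pi}(x) = \sum_{l=1}^{k} \lambda_l(x)\hat{A}_l^{\pi}(x)$ denotes the aggregate advantage of a response drawn from $\pi$. Both responses are normalized within the same fixed group, so every $\hat{A}_l^{\pi}(x)$ uses the same $\mu_l(x)$ and $\sigma_l(x)$. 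This is what lets the per-dimension terms be subtracted coordinate by coordinate.

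Concretely, linearity of Eq.~\eqref{eq:gprl_advantage} gives
\begin{equation*}
\hat{A}^{\pi^{\ast}}(x) - \hat{A}^{\pi^{\dagger}}(x) = \sum_{l \neq l^{\ast}} \lambda_l(x)\big(\hat{A}_l^{\pi^{\ast}}(x) - \hat{A}_l^{\pi^{\dagger}}(x)\big) - \lambda_{l^{\ast}}(x)\big(\hat{A}_{l^{\ast}}^{\pi^{\dagger}}(x) - \hat{A}_{l^{\ast}}^{\pi^{\ast}}(x)\big).
\end{equation*}
The first sum is exactly the left-hand side of \eqref{eq:hack_condition}, and the subtracted term is exactly its right-hand side. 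The assumed strict inequality therefore makes the difference strictly positive, which is the conclusion $\hat{A}^{\pi^{\dagger}}(x) < \hat{A}^{\pi^{\ast}}(x)$.

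The directional hypotheses (gain on $l^{\ast}$, losses on every $l \neq l^{\ast}$) and the nonnegativity $\lambda_l(x) \ge 0$ are not needed for the implication itself; \eqref{eq:hack_condition} carries all the weight. I would note that these conditions only make the statement meaningful: they ensure both sides of \eqref{eq:hack_condition} are nonnegative, so the condition genuinely trades the gain on $l^{\ast}$ against the losses elsewhere.

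There is no real technical obstacle. The one point requiring care is the interpretation of $\hat{A}_l^{\pi}(x)$ as the advantage of a response from $\pi$ computed inside the same group. If $\pi^{\ast}$ and $\pi^{\dagger}$ were normalized against different groups, the $\mu_l(x)$ and $\sigma_l(x)$ would differ, but the identity above would still hold, since it uses only linearity in the already-normalized quantities. Degenerate dimensions with $\sigma_l(x) = 0$ have $\hat{A}_l = 0$ for both policies and contribute zero to both sides, so they need no separate treatment.
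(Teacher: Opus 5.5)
Your proof is correct and is the paper's argument: both use linearity of Eq.~\eqref{eq:gprl_advantage} to split the aggregate-advantage difference into the $l^{\ast}$ term and the remaining axes, then read off the two sides of \eqref{eq:hack_condition} directly. The paper writes the difference with the opposite sign as $H - D$. Your remark that the directional hypotheses and $\lambda_l(x) \ge 0$ are not needed also holds there: the paper derives $H, D \ge 0$ but its final step uses only $D > H$.
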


The verdict $\hat{A}^{\pi^{\dagger}}(x) < \hat{A}^{\pi^{\ast}}(x)$ is the consequential half. GPRL gives the smaller aggregate advantage to the single-axis exploiter, so the clipped surrogate in Eq.~\eqref{eq:gprl_loss} pushes the policy gradient away from $\pi^{\dagger}$ and toward $\pi^{\ast}$. Under any scalar reward $R$ with $R(x, \pi^{\dagger}) > R(x, \pi^{\ast})$, by contrast, the GRPO advantage is monotone in $R$ within a fixed group, so $\pi^{\dagger}$ always wins and the gradient always rewards the exploit. The per-dimension normalization in Eq.~\eqref{eq:gprl_per_dim} is what makes Eq.~\eqref{eq:hack_condition} likely to hold, since rescaling every $\hat{A}_l^{(i)}$ to unit variance bounds how much any single dimension can grow its contribution to the aggregate. We use Eq.~\eqref{eq:hack_condition} as the design foundation of the controller below, ensuring that whenever the inequality fails, the controller intervenes on the very weights $\lambda_l(x)$ that govern it.

While Eq.~\eqref{eq:hack_condition} can be verified response by response, acting online requires a scalar that summarizes the inequality across a training step. We use the variance profile of the per-dimension advantages, since when a policy starts reward hacking $l^{\ast}$, the between-response variance concentrates on $l^{\ast}$ and collapses on the rest, which is exactly the imbalance the controller has to push back on. Define
\begin{equation}
    \alpha_l^{(t)} = \frac{\mathrm{Var}_{y_i \sim \pi_t}\big[\hat{s}_l^{(i)}(x)\big]}{\sum_{l'=1}^{k} \mathrm{Var}_{y_i \sim \pi_t}\big[\hat{s}_{l'}^{(i)}(x)\big]}, \quad l = 1, \ldots, k, \quad \mathcal{D}(t) = \mathrm{KL}\big(\alpha^{(t)} \,\|\, \alpha^{(0)}\big),
\end{equation}
with $\alpha^{(t)} = \mathbf{1}/k$ in the degenerate all-zero case. In healthy training $\alpha^{(t)}$ holds its initial shape, but under hacking it spikes on the exploited axis, causing $\mathcal{D}(t)$ to rise accordingly. We feed $\mathcal{D}(t)$ back into a closed-loop controller that retunes the aggregation weights and the KL coefficient, and Figure~\ref{fig:drift} sketches the qualitative drift trajectories it acts on.

\textbf{Controller.} The controller acts through a per-dimension multiplier $m_l(t) \geq 0$, initialized to $1$ and applied on top of the eigenvalues, so the effective weight becomes $\tilde{\lambda}_l(x, t) = m_l(t)\,\lambda_l(x)$. While $\mathcal{D}(t) > \tau$ for threshold $\tau$, the controller tightens according to
\begin{equation}
    m_l \leftarrow \mathcal{N}\left[m_l \cdot \left(\frac{\alpha_l^{(0)}}{\alpha_l^{(t)} + \varepsilon}\right)^{\gamma}\right], \quad \beta \leftarrow \min\big(\kappa \cdot \beta, \beta_{\max}\big),
    \label{eq:drift_tighten}
\end{equation}
where $\mathcal{N}[\cdot]$ renormalizes the multipliers to mean $1$, $\gamma \in (0, 1]$ sets the redistribution strength, and $\kappa > 1$ tightens the trust region. The update targets dimensions that need adjusting. Over-grown axes receive $m_l < 1$ and stagnated axes receive $m_l > 1$, so the aggregate steers back toward balance while the increase in $\beta$ slows further excursions in the meantime. Once $\mathcal{D}(t) \leq \tau$, the controller relaxes via $m_l \leftarrow \delta  m_l + (1 - \delta)$ and $\beta \leftarrow \max(\beta_0, \beta \cdot \delta)$ with recovery rate $\delta \in (0, 1)$, allowing both quantities to decay toward their baselines as the profile rebalances.
% \footnote{We do not claim $\mathcal{D}(t)$ is monotonically driven below $\tau$, but rather that the loop provides a pathway for the multi-dimensional signal of Proposition~\ref{prop:hacking} to push back on the policy update, which a scalar RM cannot supply since improvement from single-axis exploitation and improvement from balanced progress look identical through one number.}

\section{Experiments}
\label{sec:experiments}

\begin{table}[t]
\centering
\small
\caption{\textbf{AlpacaEval 2.0.} LC.~WR, WR, and average response length against the \texttt{gpt-4-turbo} reference, using \texttt{Llama-3-8B-Instruct} as the base policy and \texttt{gpt-4-turbo} as judge. SPPO and GPO follow the published three-iteration protocol, while GRPO and GPRL are reported at the matched third epoch so that all reward-model-based methods are compared at similar compute.}
\label{tab:alpacaeval}
\setlength{\tabcolsep}{6pt}
\begin{tabular}{llccccccc}
\toprule
\rowcolor{tblHeader}
 & & \multicolumn{2}{c}{\textbf{RM}} & & & & \\
\rowcolor{tblHeader}
\multirow{-2}{*}{\textbf{Group}} & \multirow{-2}{*}{\textbf{Method}} & \textbf{Size} & \textbf{Type} & \textbf{Iter./Ep.} & \multirow{-2}{*}{\textbf{LC.~WR}} & \multirow{-2}{*}{\textbf{WR}} & \multirow{-2}{*}{\textbf{Avg. Len}} \\
\midrule
\rowcolor{tblGroupA}
\cellcolor{white} & DPO~\citep{rafailov2023direct} & -- & -- & -- & 40.30 & 37.90 & 1837 \\
\rowcolor{tblGroupA}
\cellcolor{white}\multirow{-2}{*}{\textit{No reward model}} & SimPO~\citep{meng2024simpo} & -- & -- & -- & 44.70 & 40.50 & 1825 \\
\midrule
\rowcolor{tblGroupB}
\cellcolor{white}& & 2B & BT  & 3 & 40.01 & 42.12 & 2136 \\
\rowcolor{tblGroupB}
\cellcolor{white}& & 2B & GPM & 3 & 36.06 & 45.61 & 2498 \\
\rowcolor{tblGroupB}
\cellcolor{white}& & 8B & BT  & 3 & 42.55 & 40.92 & 1948 \\
\rowcolor{tblGroupB}
\cellcolor{white}& \multirow{-4}{*}{SPPO~\citep{wu2024self}} & 8B & GPM & 3 & 39.45 & 41.64 & 2385 \\
\cmidrule(lr){2-8}
\rowcolor{tblGroupB}
\cellcolor{white}& & 2B & BT  & 3 & 42.21 & 44.20 & 2151 \\
\rowcolor{tblGroupB}
\cellcolor{white}& & 2B & GPM & 3 & 37.74 & 48.25 & 2582 \\
\rowcolor{tblGroupB}
\cellcolor{white}& & 8B & BT  & 3 & 40.37 & 38.56 & 1969 \\
\rowcolor{tblGroupB}
\cellcolor{white}\multirow{-8}{*}{\textit{Reward model}} & \multirow{-4}{*}{GPO~\citep{zhang2024beyond}} & 8B & GPM & 3 & 38.98 & 41.54 & 3249 \\
\midrule
\rowcolor{tblGroupC}
\cellcolor{white}& & 2B & BT  & 3 & 39.87 & 38.21 & 1925 \\
\rowcolor{tblGroupC}
\cellcolor{white}& \multirow{-2}{*}{GRPO~\citep{shao2024deepseekmath}} & 8B & BT  & 3 & 41.92 & 40.51 & 1893 \\
\cmidrule(lr){2-8}
\rowcolor{tblGroupC}
\cellcolor{white}& & 2B & GPM & 3 & 51.08 & 45.21 & 1699 \\
\rowcolor{tblGroupC}
\cellcolor{white}\multirow{-4}{*}{\shortstack[l]{\textit{Online RL}\\\textit{with reward model}}} & \multirow{-2}{*}{\textbf{GPRL (ours)}} & 8B & GPM & 3 & \textbf{56.51} & \textbf{48.33} & \textbf{1600} \\
\bottomrule
\end{tabular}
\end{table}

\textbf{Setup.} Every method starts from the same base policy, \texttt{Llama-3-8B-Instruct}, so any difference in scores reflects the optimization signal rather than pretraining. To separate the effect of supervision structure from supervision content, we train two reward models on \texttt{Skywork-Reward}~\citep{liu2024skywork}, namely a scalar BT model and a GPM with embedding dimension $2k = 6$, each at two scales (\texttt{Gemma-2B-it} for 2B and \texttt{Llama-3.1-8B-Instruct} for 8B). The policy then trains under GPRL with online rollouts on prompts drawn from \texttt{UltraFeedback}~\citep{cui2023ultrafeedback}, using $G = 8$ generations per prompt, completion length $512$, and temperature $1.0$. We train for three epochs with AdamW at peak learning rate $1{\times}10^{-6}$, cosine schedule with $0.03$ warmup, weight decay $0.1$, gradient clipping at $1.0$, KL coefficient $\beta = 0.01$, and one inner iteration per rollout group. This setting matches the three-iteration style of SPPO and GPO so that all reward-model-based methods sit on a similar compute footing, and it also reaches the regime where the drift controller actively engages, as Appendix~\ref{app:ablation_scaling} characterizes further.

\textbf{Benchmarks.} We evaluate on three open-ended instruction-following benchmarks. AlpacaEval 2.0~\citep{dubois2024length} reports length-controlled win rate (LC.~WR) and WR against \texttt{gpt-4-turbo}\footnote{Our reported numbers come from an automated \texttt{gpt-4-turbo} judge rather than human raters, and the well-known length and stylistic biases of such judges apply to every method in our tables~\citep{dubois2024length,gu2024survey}.} on $805$ general prompts, with the LC variant explicitly debiasing against verbosity. Arena-Hard v2~\citep{li2024crowdsourced} draws $500$ adversarial queries from real Chatbot Arena traffic and currently shows the highest correlation with human Arena rankings among public benchmarks. WildBench (WB)~\citep{lin2024wildbench} scores $1024$ in-the-wild user instructions on a calibrated $0$ to $100$ WB-Score and a task-balanced WB-Reward in $[-100, 100]$; we additionally report MT-Bench~\citep{zheng2023judging} on $80$ multi-turn prompts.

\textbf{Baselines.} Five baselines span both post-training tracks. DPO~\citep{rafailov2023direct} reparameterizes RLHF as a closed-form classification loss on offline pairs, while SimPO~\citep{meng2024simpo} drops the reference model and uses a length-normalized log-likelihood margin. SPPO~\citep{wu2024self} iteratively regresses against pairwise win rates from a preference oracle, and GPO~\citep{zhang2024beyond} runs the same iterative recipe on GPM scores. GRPO~\citep{shao2024deepseekmath} paired with a scalar BT reward model is the same-paradigm online RL baseline that shares everything with GPRL except the multi-dimensional reward. Following published practice, SPPO and GPO are run for three iterations and we report iteration~3 results, since prior work establishes that this setting yields their strongest scores. GRPO and GPRL likewise report the third epoch against the matched 8B BT and 8B GPM reward models.

\begin{table}[t]
\centering
\small
\caption{\textbf{Arena-Hard v2 and MT-Bench.} Win rate (WR) against the Arena-Hard \texttt{gpt-4-0314} reference (AH2) and MT-Bench score, same setup as Table~\ref{tab:alpacaeval}.}
\label{tab:arenahard}
\setlength{\tabcolsep}{6pt}
\begin{tabular}{llccccc}
\toprule
\rowcolor{tblHeader}
  & & \multicolumn{2}{c}{\textbf{RM}} & & & \\
\rowcolor{tblHeader}
\multirow{-2}{*}{\textbf{Group}} & \multirow{-2}{*}{\textbf{Method}} & \textbf{Size} & \textbf{Type} & \textbf{Iter./Ep.} & \multirow{-2}{*}{\textbf{AH2}} & \multirow{-2}{*}{\textbf{MT-Bench}} \\
\midrule
\rowcolor{tblGroupA}
\cellcolor{white}& DPO~\citep{rafailov2023direct} & -- & -- & -- & 0.9 & 7.90 \\
\rowcolor{tblGroupA}
\cellcolor{white}\multirow{-2}{*}{\textit{No reward model}} & SimPO~\citep{meng2024simpo} & -- & -- & -- & 1.0 & 8.03 \\
\midrule
\rowcolor{tblGroupB}
\cellcolor{white}& & 8B & BT  & 3 & 0.9 & 8.02 \\
\rowcolor{tblGroupB}
\cellcolor{white}& \multirow{-2}{*}{SPPO~\citep{wu2024self}} & 8B & GPM & 3 & 0.9 & 8.12 \\
\cmidrule(lr){2-7}
\rowcolor{tblGroupB}
\cellcolor{white}& & 8B & BT  & 3 & 0.8 & 7.87 \\
\rowcolor{tblGroupB}
\cellcolor{white}\multirow{-4}{*}{\textit{Reward model}} & \multirow{-2}{*}{GPO~\citep{zhang2024beyond}} & 8B & GPM & 3 & 0.9 & 8.03 \\
\midrule
\rowcolor{tblGroupC}
\cellcolor{white}& GRPO~\citep{shao2024deepseekmath} & 8B & BT  & 3 & 0.8 & 7.69 \\
\cmidrule(lr){2-7}
\rowcolor{tblGroupC}
\cellcolor{white}\multirow{-2.3}{*}{\shortstack[l]{\textit{Online RL}\\\textit{with reward model}}} & \textbf{GPRL (ours)} & 8B & GPM & 3 & \textbf{1.3} & \textbf{8.33} \\
\bottomrule
\end{tabular}
\end{table}

\begin{table}[t]
\centering
\small
\caption{\textbf{WildBench.} Calibrated WB-Score and task-balanced WB-Reward against the three official baseline references (\texttt{gpt-4-turbo}, \texttt{claude-3-haiku}, \texttt{llama-2-70b-chat}), same setup as Table~\ref{tab:alpacaeval}.}
\label{tab:wildbench}
\setlength{\tabcolsep}{6pt}
\begin{tabular}{llccccc}
\toprule
\rowcolor{tblHeader}
 & & \multicolumn{2}{c}{\textbf{RM}} & & & \\
\rowcolor{tblHeader}
\multirow{-2}{*}{\textbf{Group}} & \multirow{-2}{*}{\textbf{Method}} & \textbf{Size} & \textbf{Type} & \textbf{Iter./Ep.} & \multirow{-2}{*}{\textbf{WB-Score}} & \multirow{-2}{*}{\textbf{WB-Reward}} \\
\midrule
\rowcolor{tblGroupA}
\cellcolor{white}& DPO~\citep{rafailov2023direct} & -- & -- & -- & 35.90 & 8.16 \\
\rowcolor{tblGroupA}
\cellcolor{white}\multirow{-2}{*}{\textit{No reward model}} & SimPO~\citep{meng2024simpo} & -- & -- & -- & 37.05 & 9.57 \\
\midrule
\rowcolor{tblGroupB}
\cellcolor{white}& & 8B & BT  & 3 & 36.14 & 8.33 \\
\rowcolor{tblGroupB}
\cellcolor{white}& \multirow{-2}{*}{SPPO~\citep{wu2024self}} & 8B & GPM & 3 & 35.21 & 7.18 \\
\cmidrule(lr){2-7}
\rowcolor{tblGroupB}
\cellcolor{white}& & 8B & BT  & 3 & 35.87 & 8.25 \\
\rowcolor{tblGroupB}
\cellcolor{white}\multirow{-4}{*}{\textit{Reward model}} & \multirow{-2}{*}{GPO~\citep{zhang2024beyond}} & 8B & GPM & 3 & 35.98 & 7.42 \\
\midrule
\rowcolor{tblGroupC}
\cellcolor{white}& GRPO~\citep{shao2024deepseekmath} & 8B & BT  & 3 & 35.41 & 8.62 \\
\cmidrule(lr){2-7}
\rowcolor{tblGroupC}
\cellcolor{white}\multirow{-2.3}{*}{\shortstack[l]{\textit{Online RL}\\\textit{with reward model}}} & \textbf{GPRL (ours)} & 8B & GPM & 3 & \textbf{37.98} & \textbf{11.15} \\
\bottomrule
\end{tabular}
\end{table}

\textbf{Main results.} Tables~\ref{tab:alpacaeval}, \ref{tab:arenahard}, and~\ref{tab:wildbench} report the scores, and Appendix~\ref{app:ablation_scaling} traces the full per-epoch trajectory. Three patterns repeat across all four benchmarks: \emph{(a) Static and iteratively-refreshed methods plateau early.} DPO and SimPO sit at $40$ to $45$ LC.~WR on AlpacaEval~2.0, while SPPO and GPO at iteration~3 stay in the same band. They cluster around $0.8$ to $0.9$ on Arena-Hard v2,\footnote{AH2 absolute scores are uniformly low because the base model is not strong on adversarial reasoning regardless of post-training. The ordering is what we test, and Appendix~\ref{app:ablation_basepolicy} reports stronger base policies.} $7.9$ to $8.1$ on MT-Bench, and $35$ to $37$ on WildBench. \emph{(b) Online RL with a scalar BT reward improves and then unwinds.} GRPO+BT peaks earlier in training than the third epoch we report and then regresses to $41.92$ LC.~WR on AlpacaEval~2.0, $7.69$ on MT-Bench, and $35.41$ on WildBench, indicative of a scalar proxy collapsing onto whichever axis it tracks most sensitively. \emph{(c) Swapping the scalar reward for GPM under the same online paradigm produces the largest jump and continues to improve through epoch~3.} GPRL using the 8B GPM reaches $56.51$ LC.~WR on AlpacaEval~2.0, beating GRPO+BT by $14.59$ points and the strongest iterative baseline (SPPO with 8B BT at $42.55$ LC.~WR) by $13.96$ points on the same metric, while the same ordering holds on the other three benchmarks.

The length numbers in Table~\ref{tab:alpacaeval} also tell a consistent story. Since the \texttt{gpt-4-turbo} judge favors longer responses~\citep{dubois2024length,zheng2023judging,shi2025judging}, iterative GPM-driven methods exploit this drift and inflate to $2400$ to $3300$ tokens by iteration~3 even as their LC win rates stall. GPRL, by contrast, holds length nearly flat at $1600$ tokens, the shortest of any reward-model-based method in the table. This is what per-dimension normalization (Eq.~\eqref{eq:gprl_per_dim}) is designed to produce, since after rescaling, no single subspace can dominate the aggregate by simply growing in magnitude, so any axis that correlates with length cannot pull the policy gradient on its own as long as the others retain comparable variance. The drift controller reinforces this by reweighting whichever subspace's variance share grows past the initial profile, length-correlated or otherwise, and collectively delivers the strongest LC.~WR in the table at the shortest average length, rather than the usual trade-off where one of the two has to give for the other to improve.

\subsection{Analysis and ablations}
\label{sec:ablations}

\begin{figure}[t]
\centering
\begin{subfigure}[b]{0.31\textwidth}
    \centering
    \includegraphics[width=\textwidth]{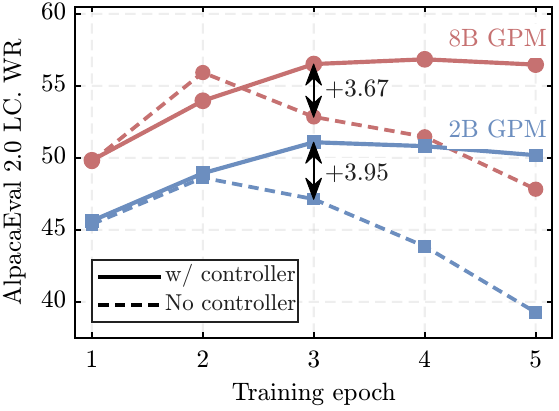}
    \caption{Extended-training scaling}
    \label{fig:scaling}
\end{subfigure}
\hspace{1em}
\begin{subfigure}[b]{0.3\textwidth}
    \centering
    \includegraphics[width=\textwidth]{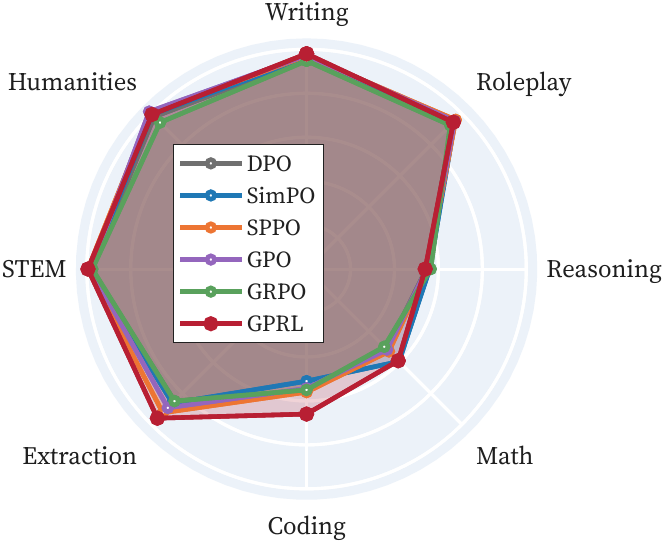}
    \caption{MT-Bench, per-category}
    \label{fig:mt_radar}
\end{subfigure}
\hspace{1em}
\begin{subfigure}[b]{0.3\textwidth}
    \centering
    \includegraphics[width=\textwidth]{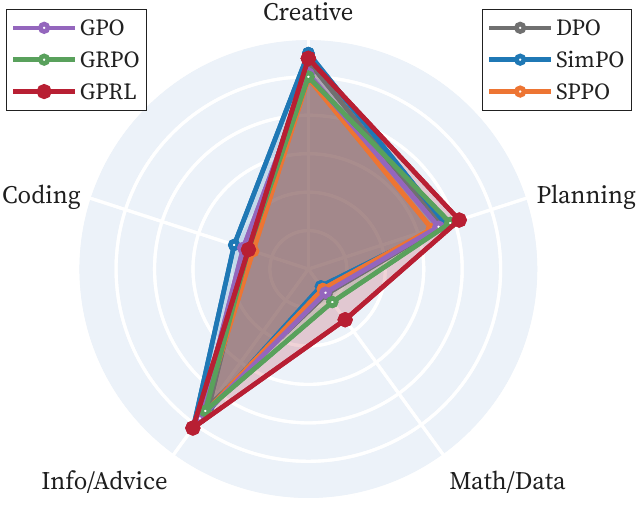}
    \caption{WildBench, per-category}
    \label{fig:wild_radar}
\end{subfigure}
\caption{\textbf{Scaling and per-category breakdown.} (\subref{fig:scaling}) AlpacaEval~2.0 LC.~WR across five training epochs at both reward-model scales, with the controller enabled holding near its peak through epoch~5 and the controller disabled degrading once drift develops. (\subref{fig:mt_radar},~\subref{fig:wild_radar}) Per-category scores on MT-Bench and WildBench, where GPRL leads on the categories that match the supervision and on structural categories while remaining within noise of the strongest baseline elsewhere.}
\label{fig:radar_charts}
\end{figure}

\textbf{Per-category gains and behavior under extended training.} Figure~\ref{fig:radar_charts} pairs the per-category breakdowns of MT-Bench and WildBench with the controller's behavior across epochs. Less expected is that from supervision on \texttt{Skywork-Reward} and rollouts on \texttt{UltraFeedback} GPRL's largest single-category gains land on the structural categories that lie outside both datasets, namely MT-Bench coding ($+1.00$ over the strongest iterative baseline) and WildBench math/data ($+2.84$ over GRPO+BT), with a comparable lead on extraction and a tie with SPPO on STEM. We attribute the pattern to the same mechanism that drives our main results, since online RL with group-relative advantages elicits emergent reasoning whenever the reward signal is rich enough to discriminate good from bad chains of thought~\citep{wang2025emergent}, and GPM's $k$ subspaces appear to provide enough discriminatory signal on the structural axes to keep the policy from collapsing onto stylistic mimicry on these prompts. This is consistent with reports from RLVR-style training~\citep{guo2025deepseek,shao2024deepseekmath} that online RL with a structured reward lifts the categories where the supervision discriminates well rather than uniformly improving every axis. Subfigure~\ref{fig:scaling} traces the same process across five epochs at both reward-model scales, where the controller-on runs hold within roughly half a point of their peak through epoch~5 while the controller-off runs degrade once drift develops.

\textbf{Ablations.} Appendix~\ref{app:ablations} provides the complete evaluation tables and analyses for each configuration; we summarize the primary findings here. (i) \textit{Number of subspaces $k$.} LC.~WR climbs from $44.21$ at $k = 1$, where GPM collapses to a single skew-symmetric block and GPRL recovers GRPO under the resulting BT-style scalar reward, to $56.51$ at $k = 3$. It then plateaus and slightly regresses by $k = 6$, with the jump from $k=1$ to $k=3$ being the largest single contributor to GPRL's gains, though the location of the plateau likely reflects \texttt{Skywork-Reward}'s facet structure rather than a universal optimum. (ii) \textit{Per-dimension vs.\ global normalization.} Replacing Eq.~\eqref{eq:gprl_per_dim} with a single global normalization that pools $(\mu, \sigma)$ across subspaces drops LC.~WR by roughly $4$ points and re-introduces length drift to $2104$ tokens, recovering the verbose pattern of the iterative GPM methods. (iii) \textit{Drift controller.} The controller has little effect at the first epoch, where the variance profile sits close to its initial shape and on/off scores stay within $0.59$ LC.~WR points, but it becomes load-bearing under extended training as the profile starts to lean on a single axis. By the third epoch it is worth $3.67$ LC.~WR points on the 8B GPM and $3.95$ on the 2B. Appendix~\ref{app:ablation_scaling} shows the gap widening to $8.65$ and $10.91$ points by epoch~5 as controller-off runs follow the same downward trajectory we observe for GRPO+BT. (iv) \textit{Drift threshold $\tau$.} Setting $\tau$ very low underperforms our default and even the disabled controller, because the controller starts redistributing weight before the policy has had a chance to legitimately concentrate variance on whichever subspace the prompt distribution rewards, and aggressive early correction drags the policy back toward the initialization profile and loses real signal alongside the spurious one. (v) \textit{Group size $G$.} Larger $G$ improves the variance estimates the drift monitor depends on, with gains saturating at $G = 8$, our default.

\textbf{Limitations.} GPRL inherits the assumptions of the GPM it consumes, so if the underlying preference model is poorly calibrated on a quality axis or simply does not represent it as a subspace, GPRL cannot recover the missing signal, and the per-dimension normalization will faithfully amplify whatever biases the GPM already encodes. Our experiments use an 8B GPM trained on \texttt{Skywork-Reward} with rollouts on \texttt{UltraFeedback} prompts, so the per-category gains on math and coding are best read as a transfer effect rather than a substitute for verifier-based RLVR. The drift controller is a feedback loop rather than a contraction, and we do not give convergence guarantees for the closed-loop dynamics in Eq.~\eqref{eq:drift_tighten} and the relaxation rule that follows it, so pathological choices of $(\gamma, \kappa, \tau)$ can produce oscillatory behavior in $\mathcal{D}(t)$. We also evaluate on a single base policy, a single reward-model corpus, and a single rollout-prompt corpus, so the dimensional plateau at $k=3$ should be read as a property of \texttt{Skywork-Reward}'s facet structure rather than of GPM in general.
% Finally, our reported numbers come from an automated \texttt{gpt-4-turbo} judge rather than human raters, and the well-known length and stylistic biases of such judges apply to every method in our tables~\citep{dubois2024length,gu2024survey}.

\section{Related Work}

\textbf{Direct and game-theoretic preference optimization.} The standard RLHF pipeline~\citep{ouyang2022training} fits a scalar BT reward model and optimizes with PPO, but the proxy score reliably climbs while true quality degrades once training is pushed at scale~\citep{gao2023scaling, zhuang2020consequences}. DPO~\citep{rafailov2023direct} sidesteps the explicit reward stage by reparameterizing the RLHF objective as a closed-form classification loss, sparking a family of offline and iterative variants such as IPO~\citep{azar2024general}, KTO~\citep{ethayarajh2024kto}, SimPO~\citep{meng2024simpo}, SPPO~\citep{wu2024self}, and Nash-MD~\citep{munos2024nash}. The game-theoretic methods in this group step beyond BT by working directly on pairwise win probabilities and casting alignment as a constant-sum two-player game, but they all rely on static or iteratively-refreshed batches and cannot keep exploring once the iterate is fixed, so the improvement ceiling tracks the quality of the collected preference data rather than the compute spent on optimization~\citep{song2024importance}.

\textbf{Online RL with verifiable rewards.} GRPO~\citep{shao2024deepseekmath} introduced critic-free group-relative advantage estimation, replacing PPO's learned value function with within-group normalization. Pairing it with binary verifiers yields the RLVR paradigm that powers DeepSeek-R1~\citep{guo2025deepseek} and subsequent refinements such as DAPO~\citep{yu2025dapo} that further stabilize entropy and trust-region dynamics over long training horizons. These methods elicit emergent reasoning behaviors including self-reflection and dynamic strategy adaptation~\citep{wang2025emergent}, but they require a programmatic verifier and so do not extend to open-ended tasks, while substituting a scalar learned reward model reintroduces hacking, which is precisely the gap GPRL addresses with a structured multi-dimensional reward.

\textbf{Multi-dimensional preferences.} A separate line of work, including Fine-Grained RLHF~\citep{wu2023fine}, Rewarded Soups~\citep{rame2023rewarded}, and MODPO~\citep{zhou2024beyond}, handles multi-faceted preferences through several independently learned scalar rewards combined by scalarization or weight interpolation, returning a Pareto family along pre-specified axes with no per-step signal for monitoring single-axis exploitation. GPM~\citep{zhang2024beyond} instead models preferences through a skew-symmetric bilinear form on $k$ subspaces, admitting intransitive cycles by construction~\citep{fishburn1981axiomatic, nakamura1998skew, fishburn1982nontransitive}, and its companion optimizer GPO is iterative and SPPO-style. GPRL keeps GPM as the reward source and brings its full $k$-dimensional output into a GRPO-style online update, with per-dimension normalization and a drift controller that build on the structural properties separating GPM from scalar BT.

\section{Conclusion}
We argued that the gap between online RL and open-ended alignment is a question of reward shape rather than reward strength, since a scalar reward model is an incomplete proxy for multi-dimensional human quality and online RL against it will reliably collapse onto whichever axis the proxy is most sensitive to. GPRL leverages GPM's structured, $k$-subspace preference signal and carries that structure through to the policy update, which both discourages single-axis exploitation by construction and exposes a closed-loop signal for our proposed controller to correct it when it threatens to occur. The same idea, that supervision structure is a first-class design variable rather than a fixed property of the loss function, may prove useful well beyond preference optimization, in any setting where a learned proxy stands in for an unmeasurable target.

\section*{LLM Usage}
\vspace{-0.25em}
\texttt{ChatGPT 5.4} by \texttt{OpenAI} was used for editing, proofreading, and resolving \LaTeX~formatting issues in this document. \texttt{Codex} by \texttt{OpenAI} was used to assist with code writing in the implementation of \texttt{gprl}.

\bibliographystyle{unsrtnat}
\bibliography{ref}

\newpage
\section*{Appendix}
\renewcommand{\arraystretch}{1}
\appendix

\section{More on general preference embeddings}
\label{app:gpm_more}

This appendix expands on the embedding construction that GPRL inherits from GPM~\citep{zhang2024beyond}, focusing on the structural properties that the per-dimension advantage estimation in Section~\ref{sec:advantage} and the drift controller in Section~\ref{sec:drift} actually depend on. We refer the reader to the original paper for the full derivation.

The choice of an even ambient dimension $\mathbb{R}^{2k}$ is forced by antisymmetry. A scalar reward $r(y \mid x)$ assumes preference is a difference of utilities, which imposes transitivity by construction. To allow cycles such as $A \succ B \succ C \succ A$, the score must be antisymmetric in $(y_i, y_j)$ and cannot reduce to $r(y_i \mid x) - r(y_j \mid x)$ for any scalar $r$. \citet{zhang2024beyond} show that any operator $\mathbf{R}$ which is both skew-symmetric ($\langle \mathbf{R} v, w \rangle = -\langle \mathbf{R} w, v \rangle$) and magnitude-preserving ($\|\mathbf{R} v\| = \|v\|$) satisfies $\mathbf{R}^2 = -\mathbf{I}$, forcing eigenvalues of $\pm i$ and an even ambient dimension. The block-diagonal canonical form then drops out from the real Schur decomposition, with each $2 \times 2$ block contributing one independent cycle direction. This explains both the $\mathbb{R}^{2k}$ ambient space and the rotation-by-$90^{\circ}$ shape of every $\mathbf{R}_l$ used in Eq.~\eqref{eq:subspace_score}.

Within block $l$, the per-subspace score $s_l(y_i, y_j \mid x) = v_i^{(2l)} v_j^{(2l-1)} - v_i^{(2l-1)} v_j^{(2l)}$ is the signed area of the parallelogram spanned by the two response embeddings, restricted to coordinates $(2l-1, 2l)$. Equivalently, writing the embedding as a complex vector $\mathbf{v} \in \mathbb{C}^{k}$ with $z_l = v^{(2l-1)} + i  v^{(2l)}$, the subspace score becomes $s_l(y_i, y_j \mid x) = \mathrm{Im}\big(z_l^{(i)} \bar{z}_l^{(j)}\big)$, so each block measures a relative phase between responses on a learned axis. Equal phases give a zero score, which GPRL's per-dimension normalization maps to $\hat{A}_l^{(i)} = 0$ by the convention in Eq.~\eqref{eq:gprl_per_dim}. Because the blocks are independent, the cycles they express compose, which is what allows a single GPM to capture the multi-axis intransitivity that GPRL exploits when the variance profile $\alpha^{(t)}$ concentrates differently across prompts.

The unit-norm constraint $\|\mathbf{v}_{y \mid x}\|_2 = 1$ is what makes the rest of GPRL well-posed. Combined with $\|\mathbf{R}_l\|_2 = 1$, it forces $|s_l(y_i, y_j \mid x)| \leq 1$ uniformly, so the per-subspace scores live on a fixed $[-1, 1]$ scale regardless of how the policy or rollout distribution shifts during training. This boundedness is precisely what allows the per-dimension means and standard deviations $\mu_l(x), \sigma_l(x)$ in Eq.~\eqref{eq:gprl_per_dim} to retain their unit-variance interpretation across training, and it gives the controller a meaningful reference profile $\alpha^{(0)}$ at step zero, since the $s_l$ scale does not move underneath it. Fine-tuning the GPM during policy training would invalidate this anchor, which is the main practical reason we keep the GPM frozen throughout GPRL.

The case $k = 1$ recovers BT and scalar GRPO. Setting $k = 1$ and $\mathbf{v}_{y \mid x} = (r(y \mid x), c)^{\top}$ with $c \neq 0$ collapses the GPM score to $s(y_i \succ y_j \mid x) = c\,(r(y_i \mid x) - r(y_j \mid x))$, recovering the BT model up to a constant rescaling~\citep{bradley1952rank}. Together with the calculation in Appendix~\ref{app:special_case}, this places scalar BT rewards, GRPO under a BT reward, and GPRL with $k = 1$ at the same point of the design space, and the $k > 1$ regime is what carries the additional structure that Sections~\ref{sec:method} and~\ref{sec:experiments} exploit. Increasing $k$ does not change query complexity, since the $k$ pairwise scores still come from one GPM forward pass and one $\mathcal{O}(k)$ inner product per pair, matching the linear scaling of BT and improving on the $\mathcal{O}(K^2)$ cost of supervised pair preference models such as PairRM~\citep{jiang2023llm}.

\newpage
\section{More on reward hacking}
\label{app:hacking_more}

This appendix locates GPRL within the existing reward-hacking literature, focusing on the specific failure modes that motivated the per-dimension structure and drift controller of Section~\ref{sec:method}.

Reward hacking refers to the phenomenon in which a policy increases a learned proxy reward $\hat{R}$ while the underlying quantity $R^{\star}$ that $\hat{R}$ was meant to estimate stays flat or degrades~\citep{skalse2022defining, fu2025reward}. \citet{gao2023scaling} characterized this empirically in RLHF as reward over-optimization, showing that as the policy spends KL budget against a learned RM, the gold reward traces a hill-shaped curve that initially climbs and then falls, with the peak depending on RM size, KL coefficient, and amount of preference data. The same qualitative shape, namely a peak followed by sustained degradation, is what we observe in Appendix~\ref{app:ablation_scaling} for GRPO+BT and for GPRL with the controller disabled, which is the empirical signature GPRL is designed to interrupt.

Three contributing factors explain the hill-shaped curve in scalar RLHF. Goodhart's law~\citep{gao2023scaling} states that any optimization pressure against an imperfect proxy will eventually find the gap between proxy and target, and the larger the optimization budget the wider the gap that gets exploited. Spurious correlations in the supervision corpus give the RM surface features that correlate with quality on the training distribution but generalize poorly. Response length is the canonical example, since longer responses are systematically preferred in many preference datasets and judges, so the RM learns that long loosely implies good and the policy then exploits this by inflating length without improving content~\citep{singhal2023long, chen2024odin, park2024disentangling}. Other documented spurious axes include sycophancy~\citep{sharma2023towards}, stylistic mimicry, and position or formatting biases in LLM-as-judge evaluation that propagate into RM training~\citep{zheng2023judging, dubois2024length}. Heavy-tailed misspecification compounds these effects, since a scalar RM has no degree of freedom to express that several axes should improve simultaneously, so once the policy finds the dominant axis the gradient stays pointed at it for the rest of training~\citep{kwa2024catastrophic}. None of these failure modes is detectable from the proxy alone, since the proxy by construction agrees with the policy that things are improving, which is precisely the asymmetry GPRL's variance profile $\alpha^{(t)}$ is built to expose.

Existing mitigations split into three families, each addressing one part of the problem. Reward model improvements such as ODIN~\citep{chen2024odin}, length-debiased preference data construction~\citep{park2024disentangling}, and causal regularization~\citep{wang2025beyond} reduce the proxy gap on a spurious axis the designer anticipates, but they do not change the shape of the proxy. Optimization-side defenses such as KL regularization~\citep{schulman2017proximal}, reward clipping, and early stopping cap the available gain rather than separating clean from dirty progress, since they cannot distinguish a policy that earned its reward from one that hacked it. Multi-objective RLHF approaches such as Fine-Grained RLHF~\citep{wu2023fine}, Rewarded Soups~\citep{rame2023rewarded}, and MODPO~\citep{zhou2024beyond} expose multi-dimensional structure but optimize a scalar combination per policy, so the same hacking dynamics apply within each combination, and the returned Pareto family does not give a per-step signal for monitoring single-axis exploitation during a single run.

GPRL combines elements of all three families while addressing the underlying shape problem directly. The reward is structurally vector-valued through the $k$ subspaces of GPM, the per-dimension normalization in Eq.~\eqref{eq:gprl_per_dim} prevents any one axis from inflating its share of the aggregate by simply growing in magnitude, and Proposition~\ref{prop:hacking} formalizes when the multi-dimensional structure tips the gradient against a single-axis exploiter. The drift controller of Section~\ref{sec:drift} then plays the optimization-side role by adjusting both the dimension weights and the KL coefficient $\beta$, but it does so based on the variance profile $\alpha^{(t)}$, which is the per-step signal that scalar pipelines cannot construct. This does not eliminate reward hacking in the broadest sense, since the GPM itself can be miscalibrated on axes outside its supervision (we acknowledge this in Section~\ref{sec:experiments} under Limitations), but it changes what the policy can hack from a scalar to a configuration of axes whose imbalance is observable. Figure~\ref{fig:drift} and Tables~\ref{tab:ablation_drift} and~\ref{tab:ablation_scaling} measure this effect quantitatively, with the controller acting on the dimensional fingerprint before the policy settles into any single-axis exploit.

\newpage
\section{Proofs}
\label{app:proofs}

We collect here the full proofs of the propositions stated in Section~\ref{sec:method}, together with the calculation that establishes GRPO as the $k=1$ special case of GPRL.

\subsection{Zero-mean property of advantages}
\label{app:zero_mean}

\begin{proof}[Proof of Proposition~\ref{prop:zero_mean}]
Fix a prompt $x$ and a dimension $l \in \{1, \ldots, k\}$, and consider two cases.

\textbf{When $\sigma_l(x) = 0$.} By the convention fixed below Eq.~\eqref{eq:gprl_per_dim}, $\hat{A}_l^{(i)}(x) = 0$ for all $i \in \{1, \ldots, G\}$, so $\sum_{i=1}^{G} \hat{A}_l^{(i)}(x) = 0$ trivially.

\textbf{When $\sigma_l(x) > 0$.} By definition,
\[
    \mu_l(x) = \frac{1}{G}\sum_{i=1}^{G} \hat{s}_l^{(i)}(x),
    \quad
    \text{so}\quad
    \sum_{i=1}^{G} \hat{s}_l^{(i)}(x) = G  \mu_l(x).
\]
Applying Eq.~\eqref{eq:gprl_per_dim} term by term and factoring the common denominator,
\begin{align}
    \sum_{i=1}^{G} \hat{A}_l^{(i)}(x)
    &= \sum_{i=1}^{G} \frac{\hat{s}_l^{(i)}(x) - \mu_l(x)}{\sigma_l(x) + \epsilon}
    = \frac{1}{\sigma_l(x) + \epsilon} \left(\sum_{i=1}^{G} \hat{s}_l^{(i)}(x) - \sum_{i=1}^{G} \mu_l(x)\right) \\
     &= \frac{1}{\sigma_l(x) + \epsilon} \left(G  \mu_l(x) - G  \mu_l(x)\right)
     = 0. \nonumber
\end{align}
Combining both cases, $\sum_{i=1}^{G} \hat{A}_l^{(i)}(x) = 0$ for every $l$. For the aggregate advantage, linearity of summation in Eq.~\eqref{eq:gprl_advantage} gives
\[
    \sum_{i=1}^{G} \hat{A}^{(i)}(x)
    = \sum_{i=1}^{G} \sum_{l=1}^{k} \lambda_l(x)  \hat{A}_l^{(i)}(x)
    = \sum_{l=1}^{k} \lambda_l(x) \sum_{i=1}^{G} \hat{A}_l^{(i)}(x)
    = \sum_{l=1}^{k} \lambda_l(x) \cdot 0
    = 0,
\]
where we used that $\lambda_l(x)$ does not depend on $i$. The conclusion holds for any choice of weights $\lambda_l(x) \in \mathbb{R}$, since the inner sum vanishes regardless of sign, which establishes the claim and the variance-reduction property that GPRL inherits from GRPO.
\end{proof}

\subsection{GRPO as a special case of GPRL}
\label{app:special_case}

The main text states without a formal proposition that fixing $k = 1$ and $\lambda_1(x) \equiv 1$ recovers GRPO under a preference-model-based scalar reward, which we verify here. Setting $k = 1$ and $\lambda_1(x) \equiv 1$, we define the induced scalar reward
\[
    R_{\mathrm{GPM}}(x, y_i) := \hat{s}_1^{(i)}(x) = \frac{1}{G-1}\sum_{j \neq i} s_1(y_i, y_j \mid x),
\]
and show in turn that (i) the per-dimension normalization reduces to the group-relative normalization of GRPO, (ii) the aggregate advantage equals this normalized quantity, and (iii) the GPRL objective reduces to the GRPO objective.

\textbf{(i) Normalization.} With $k = 1$, Eq.~\eqref{eq:gprl_per_dim} specializes to
\[
    \hat{A}_1^{(i)}(x) = \frac{\hat{s}_1^{(i)}(x) - \mu_1(x)}{\sigma_1(x) + \epsilon}
    = \frac{R_{\mathrm{GPM}}(x, y_i) - \mu_1(x)}{\sigma_1(x) + \epsilon},
\]
where, by the definitions of $\mu_l(x)$ and $\sigma_l(x)$ in Section~\ref{sec:advantage},
\[
    \mu_1(x) = \frac{1}{G}\sum_{j=1}^{G} \hat{s}_1^{(j)}(x) = \frac{1}{G}\sum_{j=1}^{G} R_{\mathrm{GPM}}(x, y_j)
    = \operatorname{mean}_{j} R_{\mathrm{GPM}}(x, y_j),
\]
\[
    \sigma_1(x) = \operatorname{std}_{j} \big[\hat{s}_1^{(j)}(x)\big] = \operatorname{std}_{j} \big[R_{\mathrm{GPM}}(x, y_j)\big].
\]
Substituting these into the expression for $\hat{A}_1^{(i)}(x)$ gives
\[
    \hat{A}_1^{(i)}(x) = \frac{R_{\mathrm{GPM}}(x, y_i) - \operatorname{mean}_{j} R_{\mathrm{GPM}}(x, y_j)}{\operatorname{std}_{j} R_{\mathrm{GPM}}(x, y_j) + \epsilon},
\]
which is precisely the GRPO advantage from Eq.~\eqref{eq:grpo} computed under the reward $R_{\mathrm{GPM}}$.

\textbf{(ii) Aggregation.} With $\lambda_1(x) \equiv 1$, Eq.~\eqref{eq:gprl_advantage} reduces to
\[
    \hat{A}^{(i)}(x) = \sum_{l=1}^{1} \lambda_l(x)  \hat{A}_l^{(i)}(x) = 1 \cdot \hat{A}_1^{(i)}(x) = \hat{A}_1^{(i)}(x),
\]
so the aggregate advantage equals the GRPO group-relative advantage of step (i).

\textbf{(iii) Objective.} Comparing Eq.~\eqref{eq:gprl_loss} with Eq.~\eqref{eq:grpo}, the two loss expressions share the same clipped-ratio structure, the same uniform average $\tfrac{1}{G}\sum_{i}$, and the same KL-regularization term, with their only formal difference being the symbol $\hat{A}^{(i)}$ versus $\hat{A}_i$ in the clipped surrogate. By step (ii), these two quantities coincide in the present limit, so substitution yields $\mathcal{L}_{\text{GPRL}}(\theta) = \mathcal{L}_{\text{GRPO}}(\theta)$ under the reward $R_{\mathrm{GPM}}$, which proves the claim.

\subsection{Resistance to single-axis reward hacking}
\label{app:hacking}

\begin{proof}[Proof of Proposition~\ref{prop:hacking}]
We express the gap $\hat{A}^{\pi^{\dagger}}(x) - \hat{A}^{\pi^{\ast}}(x)$ in a form that isolates the contribution of the hacked axis $l^{\ast}$, then apply the hypothesized inequality. Applying Eq.~\eqref{eq:gprl_advantage} to each policy and using linearity in the per-dimension advantages,
\[
    \hat{A}^{\pi^{\dagger}}(x) - \hat{A}^{\pi^{\ast}}(x)
    = \sum_{l=1}^{k} \lambda_l(x)  \hat{A}_l^{\pi^{\dagger}}(x)
    - \sum_{l=1}^{k} \lambda_l(x)  \hat{A}_l^{\pi^{\ast}}(x)
    = \sum_{l=1}^{k} \lambda_l(x)  \big(\hat{A}_l^{\pi^{\dagger}}(x) - \hat{A}_l^{\pi^{\ast}}(x)\big),
\]
and splitting the sum into the hacked axis and its complement while factoring signs gives
\begin{align*}
    \hat{A}^{\pi^{\dagger}}(x) - \hat{A}^{\pi^{\ast}}(x)
    &= \lambda_{l^{\ast}}(x)  \big(\hat{A}_{l^{\ast}}^{\pi^{\dagger}}(x) - \hat{A}_{l^{\ast}}^{\pi^{\ast}}(x)\big) + \sum_{l \neq l^{\ast}} \lambda_l(x)  \big(\hat{A}_l^{\pi^{\dagger}}(x) - \hat{A}_l^{\pi^{\ast}}(x)\big) \\
    &= \lambda_{l^{\ast}}(x)  \big(\hat{A}_{l^{\ast}}^{\pi^{\dagger}}(x) - \hat{A}_{l^{\ast}}^{\pi^{\ast}}(x)\big) - \sum_{l \neq l^{\ast}} \lambda_l(x)  \big(\hat{A}_l^{\pi^{\ast}}(x) - \hat{A}_l^{\pi^{\dagger}}(x)\big). \tag{$\ast$}\label{eq:hack_decomp}
\end{align*}
Letting $H := \lambda_{l^{\ast}}(x)  \big(\hat{A}_{l^{\ast}}^{\pi^{\dagger}}(x) - \hat{A}_{l^{\ast}}^{\pi^{\ast}}(x)\big)$ and $D := \sum_{l \neq l^{\ast}} \lambda_l(x)  \big(\hat{A}_l^{\pi^{\ast}}(x) - \hat{A}_l^{\pi^{\dagger}}(x)\big)$ denote the weighted gain on the hacked axis and the weighted degradation on the other axes, \eqref{eq:hack_decomp} reads
\[
    \hat{A}^{\pi^{\dagger}}(x) - \hat{A}^{\pi^{\ast}}(x) = H - D.
\]
The hypothesis $\hat{A}_{l^{\ast}}^{\pi^{\dagger}}(x) > \hat{A}_{l^{\ast}}^{\pi^{\ast}}(x)$ together with $\lambda_{l^{\ast}}(x) \geq 0$ gives $H \geq 0$, while the hypothesis $\hat{A}_{l}^{\pi^{\dagger}}(x) < \hat{A}_{l}^{\pi^{\ast}}(x)$ for every $l \neq l^{\ast}$ makes each summand in $D$ non-negative, so $D \geq 0$. Since the proposition's condition~\eqref{eq:hack_condition} is precisely $D > H$, combining the expression $\hat{A}^{\pi^{\dagger}}(x) - \hat{A}^{\pi^{\ast}}(x) = H - D$ with $D > H$ yields
\[
    \hat{A}^{\pi^{\dagger}}(x) - \hat{A}^{\pi^{\ast}}(x) = H - D < 0,
\]
so $\hat{A}^{\pi^{\dagger}}(x) < \hat{A}^{\pi^{\ast}}(x)$, as claimed.
\end{proof}

\textbf{Contrast with scalar rewards.}
The main text contrasts this conclusion with the scalar-reward case, and we sketch the argument here for completeness. Under any scalar reward $R(x, y)$ and the GRPO normalization of Eq.~\eqref{eq:grpo}, the advantage $\hat{A}_i = (R(x, y_i) - \operatorname{mean}_j R(x, y_j)) / (\operatorname{std}_j R(x, y_j) + \epsilon)$ is an affine, strictly increasing function of $R(x, y_i)$ for the fixed group $\{y_j\}$ whenever the group has non-zero spread, so if $R(x, \pi^{\dagger}) > R(x, \pi^{\ast})$ within the same group, then $\hat{A}^{\pi^{\dagger}} > \hat{A}^{\pi^{\ast}}$ regardless of how the two policies behave on any latent quality axes. Proposition~\ref{prop:hacking} exhibits the opposite verdict for GPRL whenever $k \geq 2$ and Eq.~\eqref{eq:hack_condition} holds, which is the formal basis for the drift monitor of Section~\ref{sec:drift}.

\newpage
\section{Additional ablations}
\label{app:ablations}

This appendix expands on the ablations summarized in Section~\ref{sec:ablations}. Unless stated otherwise, all runs use the 8B GPM, $G = 8$, $k = 3$, the drift controller with our default $\tau = 0.2$, and the three-epoch training budget that the main results use. We also report 2B GPM numbers throughout, since the drift behavior we want to characterize is more visible at the smaller reward-model scale where the variance profile is noisier and single-axis exploitation is correspondingly easier to start.

\subsection{Number of GPM subspaces $k$}
\label{app:ablation_k}

Table~\ref{tab:ablation_k} sweeps $k$ from $1$ to $6$ at both reward-model scales, and the $k = 1$ row deserves a clarification before we discuss the trend. With a single skew-symmetric block the GPM score $s_1(y_i, y_j \mid x) = v_i^{(2)} v_j^{(1)} - v_i^{(1)} v_j^{(2)}$ defines a pairwise comparison along one axis, and the per-dimension normalization of Eq.~\eqref{eq:gprl_per_dim} reduces to GRPO's group-relative normalization under the induced scalar reward $R_{\mathrm{GPM}}(x, y_i) = \hat{s}_1^{(i)}(x)$, which is the special case formalized in Appendix~\ref{app:special_case} with $\lambda_1(x) \equiv 1$. This row is GRPO under a BT-style scalar reward derived from the GPM block rather than from a separately trained scalar BT model. Although the two are not numerically identical they land in the same neighborhood ($44.21$ for $k=1$ versus $43.18$ for GRPO+BT at the 8B scale), which is a consistency check we want, and the rest of the ablation isolates the effect of $k$ rather than of the reward-model architecture. Adding a single subspace ($k = 1 \to 2$) buys $7$ to $8$ LC.~WR points at either scale, while adding a second ($k = 2 \to 3$) buys another $4$ to $5$ and is the largest individual jump in the table. After this the gain disappears and the run mildly regresses by $k = 6$, which we read as a property of the supervision rather than of GPRL itself. Since \texttt{Skywork-Reward} aggregates preferences along a small set of facets, this puts a soft ceiling on how many independently informative subspaces a GPM trained on it can carve out, and subspaces beyond that ceiling pick up noise that the per-dimension normalization then amplifies, which makes scaling $k$ jointly with the reward-model supervision corpus the natural next experiment.

\begin{table}[h]
\centering
\small
\caption{\textbf{Ablation on the number of GPM subspaces $k$.} AlpacaEval~2.0 LC.~WR, raw WR, and average response length, with all other hyperparameters fixed at their main-results values and the drift controller enabled. At $k = 1$ the GPM collapses to a single skew-symmetric block, and GPRL with $\lambda_1(x) \equiv 1$ recovers GRPO under a BT-style scalar reward $R_{\mathrm{GPM}}$.}
\label{tab:ablation_k}
\setlength{\tabcolsep}{8pt}
\begin{tabular}{lcccccc}
\toprule
\rowcolor{tblHeader}
& \multicolumn{3}{c}{\textbf{2B GPM}} & \multicolumn{3}{c}{\textbf{8B GPM}} \\
\rowcolor{tblHeader}
\multirow{-2}{*}{\textbf{Subspaces} $k$} & \textbf{LC.~WR} & \textbf{WR} & \textbf{Avg.~Len} & \textbf{LC.~WR} & \textbf{WR} & \textbf{Avg.~Len} \\
\midrule
$k = 1$ (GRPO under $R_{\mathrm{GPM}}$) & \cellcolor{tblGroupB}39.92 & \cellcolor{tblGroupB}38.74 & \cellcolor{tblGroupB}1812 & \cellcolor{tblGroupA}44.21 & \cellcolor{tblGroupA}41.07 & \cellcolor{tblGroupA}1738 \\
$k = 2$ & \cellcolor{tblGroupB}47.30 & \cellcolor{tblGroupB}42.96 & \cellcolor{tblGroupB}1742 & \cellcolor{tblGroupA}51.86 & \cellcolor{tblGroupA}45.92 & \cellcolor{tblGroupA}1671 \\
$k = 3$ & \cellcolor{tblGroupB}\textbf{51.08} & \cellcolor{tblGroupB}\textbf{45.21} & \cellcolor{tblGroupB}\textbf{1699} & \cellcolor{tblGroupA}\textbf{56.51} & \cellcolor{tblGroupA}\textbf{48.33} & \cellcolor{tblGroupA}\textbf{1600} \\
$k = 4$ & \cellcolor{tblGroupB}50.62 & \cellcolor{tblGroupB}44.83 & \cellcolor{tblGroupB}1714 & \cellcolor{tblGroupA}56.18 & \cellcolor{tblGroupA}47.94 & \cellcolor{tblGroupA}1612 \\
$k = 6$ & \cellcolor{tblGroupB}49.81 & \cellcolor{tblGroupB}44.27 & \cellcolor{tblGroupB}1758 & \cellcolor{tblGroupA}55.74 & \cellcolor{tblGroupA}47.41 & \cellcolor{tblGroupA}1648 \\
\bottomrule
\end{tabular}
\end{table}

\subsection{Per-dimension vs. global normalization}
\label{app:ablation_norm}

Table~\ref{tab:ablation_norm} replaces the per-dimension normalization of Eq.~\eqref{eq:gprl_per_dim} with a single global normalization that pools $(\mu, \sigma)$ across all $k$ subspaces. LC.~WR drops by roughly $4$ points at both reward-model scales while average response length jumps by $400$ to $500$ tokens, recovering the verbose pattern of the iterative GPM-based methods. The mechanism is exactly the one Section~\ref{sec:advantage} predicts, since whichever subspace happens to carry the largest raw magnitudes drives both $\mu$ and $\sigma$ under global normalization, so the rescaled advantages of the other subspaces collapse toward zero and their contribution to the aggregate vanishes. If that dominant subspace correlates with length, as it tends to under judges that favor longer responses, the policy gradient inherits the correlation and response length climbs.

\begin{table}[h]
\centering
\small
\caption{\textbf{Ablation on the normalization scheme.} Replacing the per-dimension normalization of Eq.~\eqref{eq:gprl_per_dim} with a single global normalization that shares $(\mu, \sigma)$ across all $k$ subspaces. AlpacaEval~2.0 with $k = 3$ and the drift controller enabled, at both reward-model scales.}
\label{tab:ablation_norm}
\setlength{\tabcolsep}{8pt}
\begin{tabular}{lcccccc}
\toprule
\rowcolor{tblHeader}
& \multicolumn{3}{c}{\textbf{2B GPM}} & \multicolumn{3}{c}{\textbf{8B GPM}} \\
\rowcolor{tblHeader}
\multirow{-2}{*}{\textbf{Normalization}} & \textbf{LC.~WR} & \textbf{WR} & \textbf{Avg.~Len} & \textbf{LC.~WR} & \textbf{WR} & \textbf{Avg.~Len} \\
\midrule
Global (shared $\mu, \sigma$) & \cellcolor{tblGroupB}47.13 & \cellcolor{tblGroupB}44.86 & \cellcolor{tblGroupB}2168 & \cellcolor{tblGroupA}52.34 & \cellcolor{tblGroupA}47.86 & \cellcolor{tblGroupA}2104 \\
Per-dimension (ours) & \cellcolor{tblGroupB}\textbf{51.08} & \cellcolor{tblGroupB}\textbf{45.21} & \cellcolor{tblGroupB}\textbf{1699} & \cellcolor{tblGroupA}\textbf{56.51} & \cellcolor{tblGroupA}\textbf{48.33} & \cellcolor{tblGroupA}\textbf{1600} \\
\bottomrule
\end{tabular}
\end{table}

\subsection{Drift controller}
\label{app:ablation_drift}

The controller's effect is least visible in any single epoch and most visible across the training trajectory, and Table~\ref{tab:ablation_drift} reports two complementary checkpoints that probe the regime where the controller actually does work. At the first epoch on the 8B GPM the controller is essentially decorative, since LC.~WR moves by less than a point and terminal drift $\mathcal{D}(t_{\text{end}})$ stays near $0.02$ in either case, well under the default threshold $\tau = 0.2$, which is the right outcome for healthy training and a useful sanity check that the controller does not interfere when it has nothing to push back on. By the third epoch the picture has reversed on both reward models, since the controller is worth $3.67$ LC.~WR points and $0.92$ WB-Score on the 8B GPM and $3.95$ and $1.61$ on the 2B GPM, with the controller-off run developing measurable drift ($\mathcal{D}(t_{\text{end}}) = 0.18$ on 2B, $0.11$ on 8B) while the on-run holds $\mathcal{D}(t_{\text{end}})$ near $0.04$ throughout. The 2B GPM enters this regime earlier than the 8B because its variance estimates are likely noisier and single-axis exploitation is correspondingly easier to start, while the 8B GPM gets there through accumulated drift across three epochs, and Appendix~\ref{app:ablation_scaling} extends the comparison to epoch~5 and shows that the gap continues to widen.

\begin{table}[h]
\centering
\small
\caption{\textbf{Drift controller ablation.} Disabling the controller corresponds to $\tau = \infty$ and $m_l(t) \equiv 1$. The first-epoch row on the 8B GPM serves as the no-drift sanity check, while the third-epoch rows capture the regime our main results use, with the 2B GPM included to show that drift develops earlier under noisier supervision.}
\label{tab:ablation_drift}
\setlength{\tabcolsep}{7pt}
\begin{tabular}{llcccc}
\toprule
\rowcolor{tblHeader}
\textbf{Setting} & \textbf{Controller} & \textbf{LC.~WR} & \textbf{WB-Score} & \textbf{Avg.~Len} & \textbf{$\mathcal{D}(t_{\text{end}})$} \\
\midrule
\rowcolor{tblGroupC}
\cellcolor{white}8B GPM, epoch 1 & off & 49.74 & 36.71 & 1718 & 0.02 \\
\rowcolor{tblGroupC}
\cellcolor{white}8B GPM, epoch 1 & on  & \textbf{49.83} & \textbf{36.84} & \textbf{1721} & 0.02 \\
\midrule
\rowcolor{tblGroupB}
\cellcolor{white}2B GPM, epoch 3 & off & 47.13 & 35.45 & 1864 & 0.18 \\
\rowcolor{tblGroupB}
\cellcolor{white}2B GPM, epoch 3 & on  & \textbf{51.08} & \textbf{37.06} & \textbf{1699} & 0.05 \\
\midrule
\rowcolor{tblGroupC}
\cellcolor{white}8B GPM, epoch 3 & off & 52.84 & 37.06 & 1742 & 0.11 \\
\rowcolor{tblGroupC}
\cellcolor{white}8B GPM, epoch 3 & on  & \textbf{56.51} & \textbf{37.98} & \textbf{1600} & 0.03 \\
\bottomrule
\end{tabular}
\end{table}

\subsection{Extended training}
\label{app:ablation_scaling}

The drift controller is a feedback loop rather than a one-shot adjustment, so the cleanest way to see what it buys us is to run the same configuration with and without it across an extended horizon. Table~\ref{tab:ablation_scaling} reports five epochs of training on both reward-model scales while Figure~\ref{fig:scaling} (in Section~\ref{sec:ablations}) plots the LC.~WR trajectories.

Three regimes emerge across the table. Through the first epoch the controller is essentially invisible, since the variance profile sits close to $\alpha^{(0)}$ on either reward model, $\mathcal{D}(t)$ never crosses $\tau$, and the on and off curves are within a point of each other, which matches the design intent of Section~\ref{sec:drift} that the controller should remain dormant when the training signal is healthy. From epochs 2 to 3 the curves split, with the noisier 2B GPM losing ground from epoch 2 onward as drift develops earlier, while the 8B GPM holds slightly past epoch 2 before turning over at epoch 3. In both cases the controller-on run continues to improve because whenever the variance profile starts to lean on a single subspace, the controller redistributes weight back toward the rest before the policy gradient can lock in. Past epoch 3 the controller-off curves enter sustained degradation, with the 2B run dropping $9.35$ LC.~WR points between epochs 2 and 5 and the 8B run dropping $8.10$ over the same span as $\mathcal{D}(t_{\text{end}})$ climbs from $0.04$ to $0.27$ on the 8B model. The controller-on curves plateau and oscillate within roughly half a point of their peak, with the 8B GPM peaking at $56.84$ at epoch 4 and settling at $56.47$ by epoch 5, and the 2B GPM peaking at $51.08$ at epoch 3 and settling at $50.18$ by epoch 5. This oscillation is expected rather than being a sign of instability, since the closed-loop dynamics of $m_l(t)$ produce damped fluctuations (an empirical observation) rather than monotone convergence.

\begin{table}[h]
\centering
\small
\caption{\textbf{LC.~WR on AlpacaEval 2.0 across five epochs.} GPRL with the drift controller enabled and disabled, both reward-model scales, all other hyperparameters fixed at their main-results values.}
\label{tab:ablation_scaling}
\setlength{\tabcolsep}{6pt}
\begin{tabular}{llccccc}
\toprule
\rowcolor{tblHeader}
\textbf{RM} & \textbf{Controller} & \textbf{Epoch 1} & \textbf{Epoch 2} & \textbf{Epoch 3} & \textbf{Epoch 4} & \textbf{Epoch 5} \\
\midrule
\rowcolor{tblGroupB}
\cellcolor{white}2B GPM & off & 45.39 & 48.62 & 47.13 & 43.85 & 39.27 \\
\rowcolor{tblGroupB}
\cellcolor{white}2B GPM & on  & 45.62 & 48.93 & \textbf{51.08} & 50.81 & 50.18 \\
\midrule
\rowcolor{tblGroupC}
\cellcolor{white}8B GPM & off & 49.74 & 55.92 & 52.84 & 51.46 & 47.82 \\
\rowcolor{tblGroupC}
\cellcolor{white}8B GPM & on  & 49.83 & 53.96 & 56.51 & \textbf{56.84} & 56.47 \\
\bottomrule
\end{tabular}
\end{table}

\subsection{Drift threshold $\tau$}
\label{app:ablation_tau}

Table~\ref{tab:ablation_tau} sweeps the drift threshold $\tau$ at the 8B GPM with three epochs of training, the regime where the controller actually does work (Appendix~\ref{app:ablation_drift}), and the shape of the table answers the natural question of why we do not just set $\tau$ as small as possible. At $\tau = 0.05$ the controller engages almost from the first step and continuously redistributes weight away from any subspace whose variance share exceeds the initial profile by even a small margin. Since legitimate optimization concentrates variance on whichever subspace the prompt distribution actually rewards (helpfulness on a summarization prompt, faithfulness on a factual one, and so on), this aggressive correction drags the variance profile back toward the initialization $\alpha^{(0)}$ before any genuine signal can accumulate, which amounts to running with the eigenvalues frozen and produces a result $4.07$ LC.~WR points below our default at an average length close to a barely post-trained model. As $\tau$ grows, the controller gives the policy room to concentrate variance for the right reasons before stepping in, and performance peaks at our default $\tau = 0.2$, beyond which the controller engages too late and the run drifts back toward the no-controller numbers. The default $\tau = 0.2$ therefore balances two distinct failure modes, namely premature collapse of the variance profile at small $\tau$ and unchecked single-axis exploitation at large $\tau$, rather than trading off two equally good behaviors.

\begin{table}[h]
\centering
\small
\caption{\textbf{Ablation on the drift threshold $\tau$.} AlpacaEval~2.0 LC.~WR, average response length, and terminal drift $\mathcal{D}(t_{\text{end}})$ with the 8B GPM, three-epoch training, and all other hyperparameters fixed.}
\label{tab:ablation_tau}
\setlength{\tabcolsep}{8pt}
\begin{tabular}{lccc}
\toprule
\rowcolor{tblHeader}
\textbf{Threshold} $\tau$ & \textbf{LC.~WR} & \textbf{Avg.~Len} & \textbf{$\mathcal{D}(t_{\text{end}})$} \\
\midrule
\rowcolor{tblGroupC}
\cellcolor{white}$\tau = 0.05$ (very early) & 52.44 & 1538 & 0.02 \\
\rowcolor{tblGroupC}
\cellcolor{white}$\tau = 0.10$              & 55.21 & 1571 & 0.03 \\
\rowcolor{tblGroupC}
\cellcolor{white}$\tau = 0.20$ (default)    & \textbf{56.51} & \textbf{1600} & 0.03 \\
\rowcolor{tblGroupC}
\cellcolor{white}$\tau = 0.40$              & 55.62 & 1684 & 0.08 \\
\rowcolor{tblGroupC}
\cellcolor{white}$\tau = \infty$ (disabled) & 52.84 & 1742 & 0.11 \\
\bottomrule
\end{tabular}
\end{table}

\subsection{Group size $G$}
\label{app:ablation_group}

Table~\ref{tab:ablation_group} sweeps the rollout group size $G$. Since the drift monitor depends on the per-dimension variance estimates $\mathrm{Var}_{y_i \sim \pi_t}[\hat{s}_l^{(i)}(x)]$ computed from the $G$ responses in the group, the noise of the variance estimate scales as $1/(G-1)$, so at $G = 2$ those estimates are too noisy for the controller to act on reliably and $\mathcal{D}(t)$ runs several times its $G = 8$ value at both reward-model scales. The LC.~WR cost is steep at $7$ to $9$ points below our default, while doubling to $G = 4$ recovers most of the drop but still leaves $2$ to $3$ points on the table, and gains saturate at $G = 8$ with $G = 16$ producing only a marginal further improvement at twice the rollout cost. We therefore use $G = 8$ throughout the main results.

\begin{table}[h]
\centering
\small
\caption{\textbf{Ablation on group size $G$.} AlpacaEval~2.0 LC.~WR and terminal drift $\mathcal{D}(t_{\text{end}})$ at both reward-model scales, with the controller enabled.}
\label{tab:ablation_group}
\setlength{\tabcolsep}{8pt}
\begin{tabular}{lcccc}
\toprule
\rowcolor{tblHeader}
& \multicolumn{2}{c}{\textbf{2B GPM}} & \multicolumn{2}{c}{\textbf{8B GPM}} \\
\rowcolor{tblHeader}
\multirow{-2}{*}{\textbf{Group size} $G$} & \textbf{LC.~WR} & \textbf{$\mathcal{D}(t_{\text{end}})$} & \textbf{LC.~WR} & \textbf{$\mathcal{D}(t_{\text{end}})$} \\
\midrule
$G = 2$  & \cellcolor{tblGroupB}41.94 & \cellcolor{tblGroupB}0.16 & \cellcolor{tblGroupA}47.83 & \cellcolor{tblGroupA}0.09 \\
$G = 4$  & \cellcolor{tblGroupB}48.07 & \cellcolor{tblGroupB}0.08 & \cellcolor{tblGroupA}53.62 & \cellcolor{tblGroupA}0.05 \\
$G = 8$  & \cellcolor{tblGroupB}\textbf{51.08} & \cellcolor{tblGroupB}\textbf{0.05} & \cellcolor{tblGroupA}\textbf{56.51} & \cellcolor{tblGroupA}\textbf{0.03} \\
$G = 16$ & \cellcolor{tblGroupB}51.36 & \cellcolor{tblGroupB}0.04 & \cellcolor{tblGroupA}56.74 & \cellcolor{tblGroupA}0.03 \\
\bottomrule
\end{tabular}
\end{table}

\subsection{Stronger base policies}
\label{app:ablation_basepolicy}

In the main text, all results start from \texttt{Llama-3-8B-Instruct} for direct comparison with the SPPO and GPO settings of~\citet{wu2024self} and~\citet{zhang2024beyond}, but a natural follow-up question is whether GPRL's gains hold when the base policy itself is stronger. We report preliminary results on three additional base policies that span the open 8 to 9B class, namely \texttt{Llama-3.1-8B-Instruct}~\citep{grattafiori2024llama}, \texttt{Gemma-2-9B-it}~\citep{team2024gemma}, and \texttt{Qwen3-8B}~\citep{yang2025qwen3}, with everything else held at the same setting (8B GPM, $G = 8$, $k = 3$, and so on). To put the numbers in context, the published AlpacaEval~2.0 LC.~WR for the bases themselves sits at roughly $22.6$ for \texttt{Llama-3-8B-Instruct}, $24.7$ for \texttt{Llama-3.1-8B-Instruct}, $51.1$ for \texttt{Gemma-2-9B-it}, and $52.4$ for \texttt{Qwen3-8B}, so each successive base shrinks the headroom that any post-training method can claim.

Table~\ref{tab:base_policy} reports GPRL applied to each base model. As a useful reference point, SimPO on \texttt{Gemma-2-9B-it} with on-policy ArmoRM-annotated UltraFeedback data reaches $72.4$ LC.~WR at $1833$ tokens~\citep{meng2024simpo}, which we take as the strongest publicly reported preference-optimization number on this base, and GPRL improves on it by roughly $1.9$ points at a comparable length. These runs use the same \texttt{Skywork-Reward}-trained 8B GPM as the main results, so the dimensional plateau at $k = 3$ inherits all the corpus-specific caveats from Section~\ref{sec:experiments} and Appendix~\ref{app:ablation_k}, and a properly matched GPM trained on a corpus richer in math and reasoning supervision would likely shift the per-category breakdowns in Figures~\ref{fig:mt_radar} and~\ref{fig:wild_radar} further toward the structural axes where \texttt{Qwen3-8B} already excels.

\begin{table}[h]
\centering
\small
\caption{\textbf{GPRL across base policies.} AlpacaEval~2.0 LC.~WR, WR, and average response length, with the 8B GPM, $G = 8$, $k = 3$, drift controller enabled, and three epochs of training. The $\Delta$ column reports the LC.~WR gain over the base policy.}
\label{tab:base_policy}
\setlength{\tabcolsep}{8pt}
\begin{tabular}{lcccc}
\toprule
\rowcolor{tblHeader}
\textbf{Base policy} & \textbf{LC.~WR} & \textbf{WR} & \textbf{Avg.~Len} & $\Delta$ \\
\midrule
\rowcolor{tblGroupC}
\cellcolor{white}\texttt{Llama-3-8B-Instruct}~\citep{grattafiori2024llama}   & 56.51 & 48.33 & 1600 & $+33.91$ \\
\rowcolor{tblGroupC}
\cellcolor{white}\texttt{Llama-3.1-8B-Instruct}~\citep{grattafiori2024llama} & 58.27 & 50.04 & 1612 & $+33.55$ \\
\rowcolor{tblGroupC}
\cellcolor{white}\texttt{Gemma-2-9B-it}~\citep{team2024gemma}          & \textbf{74.31} & \textbf{67.42} & 1791 & $+23.21$ \\
\rowcolor{tblGroupC}
\cellcolor{white}\texttt{Qwen3-8B}~\citep{yang2025qwen3}               & 63.79 & 60.18 & 2186 & $+11.39$ \\
\bottomrule
\end{tabular}
\end{table}

\texttt{Qwen3-8B} starts from a verbose base of $3100$ tokens on AlpacaEval~2.0 and GPRL pulls this down to $2186$ while still improving the LC.~WR, which is the same pattern we predicted in Section~\ref{sec:experiments} where per-dimension normalization prevents the length-correlated subspace from dominating the aggregate. \texttt{Gemma-2-9B-it} lands at $1791$ tokens, which is slightly under the $1833$ that the SimPO~\citep{meng2024simpo} checkpoint produces, and the controller engages on roughly the same schedule as on \texttt{Llama-3-8B-Instruct} despite the much stronger base. We read this as evidence that the variance profile $\alpha^{(t)}$ is a property of the GPM and the rollouts rather than of the policy's initial capability.

\newpage
\section{Implementation details}
\label{app:impl}

This appendix collects the practical settings used to produce the results presented in Section~\ref{sec:experiments} and the corresponding supplementary tables. We aim to ensure that the reader can reliably reproduce our findings or extend our methodology without needing to infer experimental specifics from the broader narrative. Where a hyperparameter is shared with GRPO~\citep{shao2024deepseekmath} or the SPPO and GPO baselines~\citep{wu2024self, zhang2024beyond}, we deliberately retain the previously published values to facilitate a rigorous and equitable comparison, and we flag the cases where we deviate.

\textbf{Hardware.} All training experiments were executed on a dedicated cluster designated as \texttt{REDACTED\_FOR\_SUBMISSION}, which utilizes NVIDIA DGX H100 nodes equipped with eight H100 80GB GPUs each. A single computation node provides sufficient processing capacity to derive the primary results at the 8B parameter scale. Unlike the training experiments, all evaluation procedures are computationally lightweight and operate efficiently on a single H100 80GB GPU, since only forward passes through the policy are required for response generation.

\textbf{Reproducibility.} To provide a clear overview of the experimental framework, Table~\ref{tab:hparams} contains key hyperparameters. Code will be released at the URL given in the abstract.

\begin{table}[h]
\centering
\small
\caption{\textbf{Implementation details for GPRL.} Hyperparameter values used to produce the main results in Section~\ref{sec:experiments} unless an ablation overrides them.}
\label{tab:hparams}
\setlength{\tabcolsep}{8pt}
\begin{tabular}{lll}
\toprule
\rowcolor{tblHeader}
\textbf{Group} & \textbf{Setting} & \textbf{Value} \\
\midrule
\rowcolor{tblGroupA}
\cellcolor{white} & Base policy & \texttt{Llama-3-8B-Instruct} \\
\rowcolor{tblGroupA}
\cellcolor{white} & Rollout prompts & \texttt{UltraFeedback} \\
\rowcolor{tblGroupA}
\cellcolor{white} & GPM reward source & 8B GPM trained on \texttt{Skywork-Reward} \\
\rowcolor{tblGroupA}
\cellcolor{white} & GPM embedding dimension $2k$ & $6$ ($k = 3$ subspaces) \\
\rowcolor{tblGroupA}
\cellcolor{white} & Eigenvalue source & uniform ($\lambda_l(x) \equiv 1$) \\
\rowcolor{tblGroupA}
\cellcolor{white}\multirow{-6}{*}{\shortstack[l]{\textit{Policy and}\\\textit{dataset}}} & Prompt head & disabled \\
\midrule
\rowcolor{tblGroupB}
\cellcolor{white} & Optimizer & AdamW \\
\rowcolor{tblGroupB}
\cellcolor{white} & Peak learning rate & $1 \times 10^{-6}$ \\
\rowcolor{tblGroupB}
\cellcolor{white} & LR scheduler & cosine \\
\rowcolor{tblGroupB}
\cellcolor{white} & Warmup ratio & $0.03$ \\
\rowcolor{tblGroupB}
\cellcolor{white} & Weight decay & $0.1$ \\
\rowcolor{tblGroupB}
\cellcolor{white} & Gradient clipping & $1.0$ \\
\rowcolor{tblGroupB}
\cellcolor{white} & Initial KL coefficient $\beta$ & $0.01$ \\
\rowcolor{tblGroupB}
\cellcolor{white} & Per-device batch size & $1$ \\
\rowcolor{tblGroupB}
\cellcolor{white} & Gradient accumulation steps & $8$ \\
\rowcolor{tblGroupB}
\cellcolor{white} & Effective batch size & $64$ \\
\rowcolor{tblGroupB}
\cellcolor{white} & Training epochs & $3$ \\
\rowcolor{tblGroupB}
\cellcolor{white} & Inner iterations per group & $1$ \\
\rowcolor{tblGroupB}
\cellcolor{white}\multirow{-13}{*}{\textit{Optimization}} & Mixed precision & bfloat16 \\
\midrule
\rowcolor{tblGroupC}
\cellcolor{white} & Group size $G$ & $8$ \\
\rowcolor{tblGroupC}
\cellcolor{white} & Max completion length & $512$ \\
\rowcolor{tblGroupC}
\cellcolor{white} & Sampling temperature & $1.0$ \\
\rowcolor{tblGroupC}
\cellcolor{white}\multirow{-4}{*}{\textit{Rollouts}} & Backend & vLLM \\
\midrule
\rowcolor{tblGroupA}
\cellcolor{white} & Threshold $\tau$ & $0.20$ \\
\rowcolor{tblGroupA}
\cellcolor{white} & Redistribution exponent $\gamma$ & $0.5$ \\
\rowcolor{tblGroupA}
\cellcolor{white} & KL multiplier $\kappa$ & $1.5$ \\
\rowcolor{tblGroupA}
\cellcolor{white} & KL ceiling $\beta_{\max}$ & $0.20$ \\
\rowcolor{tblGroupA}
\cellcolor{white}\multirow{-5}{*}{\shortstack[l]{\textit{Drift}\\\textit{controller}}} & Relaxation rate $\delta$ & $0.99$ \\
\midrule
\rowcolor{tblGroupB}
\cellcolor{white} & Training & 1 DGX H100 node ($8 \times$ H100 80GB) \\
\rowcolor{tblGroupB}
\cellcolor{white}\multirow{-2}{*}{\textit{Hardware}} & Evaluation & $1 \times$ H100 80GB \\
\bottomrule
\end{tabular}
\end{table}

% \clearpage 
% \newpage
% \input{checklist.tex}

\end{document}